\theoremstyle{thmstyleone}%
\newtheorem{theorem}{Theorem}
\theoremstyle{thmstyletwo}%
\theoremstyle{thmstylethree}%
\newtheorem{defin}{Definition}
\newtheorem{corollary}{Corollary}
\newtheorem{conjecture}{Conjecture}
\newtheorem{assumption}{Assumption}
\DeclareMathOperator{\doop}{\textit{do}}
\DeclareMathOperator{\pa}{pa}
\DeclareMathOperator{\Pa}{Pa}
\DeclareMathOperator{\sign}{\text{sign}}
\DeclareMathOperator{\tran}{^{\mkern-1.5mu\mathsf{T}}}
\newcommand{\inner}[1]{\langle #1 \rangle}
\newtcolorbox{hintBox}{textmarker,
    borderline west={6pt}{0pt}{yellow},
    colback=yellow!10!white}
\newcommand{\hypothesis}[1]{\begin{tcolorbox}[width=\columnwidth, halign=left, colframe=black, colback=white, boxsep=0mm, arc=2mm] #1 \end{tcolorbox}}
\newcommand{\RaColor}{black}
\newcommand{\RbColor}{black}
\newcommand{\RcColor}{black}
\newcommand{\Ra}[1]{\textcolor{\RaColor}{#1}} 
\newcommand{\Rb}[1]{\textcolor{\RbColor}{#1}} 
\newcommand{\Rc}[1]{\textcolor{\RcColor}{#1}} 
\begin{document}



\title[Structural Causal Models Reveal Confounder Bias in Linear Program Modelling]{Structural Causal Models Reveal Confounder Bias in Linear Program Modelling}



\author*[1]{\fnm{Matej} \sur{Ze{\v{c}}evi{\'c}}}\email{matej.zecevic@tu-darmstadt.de}

\author[2,4,†]{\fnm{Devendra} \sur{Singh Dhami}}\email{d.s.dhami@tue.nl}

\author[1,2,3]{\fnm{Kristian} \sur{Kersting}}\email{kersting@cs.tu-darmstadt.de}

\affil[1]{\orgname{Technical University of Darmstadt}, \orgaddress{\city{Darmstadt}, \country{Germany}}}

\affil[2]{\orgname{hessian.AI}, \orgaddress{\city{Darmstadt},  \country{Germany}}}

\affil[3]{\orgname{DFKI}, \orgaddress{\city{Darmstadt},  \country{Germany}}}

\affil[4]{\orgname{Eindhoven University of Technology}, \orgaddress{\city{Eindhoven}, \country{Netherlands}}}


\abstract{The recent years have been marked by extended research on adversarial attacks, especially on deep neural networks. With this work we intend on posing and investigating the question of whether the phenomenon might be more general in nature, that is, adversarial-style attacks outside classical classification tasks. Specifically, we investigate optimization problems as they constitute a fundamental part of modern AI research. To this end, we consider the base class of optimizers namely Linear Programs (LPs). On our initial attempt of a naïve mapping between the formalism of adversarial examples and LPs, we quickly identify the key ingredients missing for making sense of a reasonable notion of adversarial examples for LPs. Intriguingly, the formalism of Pearl's notion to causality allows for the right description of adversarial like examples for LPs. Characteristically, we show the direct influence of the Structural Causal Model (SCM) onto the subsequent LP optimization, which ultimately exposes a notion of confounding in LPs (inherited by said SCM) that allows for adversarial-style attacks. We provide both the general proof formally alongside existential proofs of such intriguing LP-parameterizations based on SCM for three combinatorial problems, namely Linear Assignment, Shortest Path and a real world problem of energy systems. \vspace{.25cm}\\
$^\dagger$\textit{DSD contributed while being with 1 and 2 before joining 4.}}

\keywords{adversarial-style examples, causality, linear programming}



\maketitle

\section{Introduction}\label{sec1}
Adversarial attacks have gained a lot of traction in recent years \citep{brendel2017decision,guo2019simple} as there has been a lot of focus on safety and robustness of machine learning (ML) systems. An interesting observation, though, is that deep neural networks or rather over-parameterized models are the center of attention for most of such adversarial attacks \citep{zugner2018adversarial,chen2018shapeshifter}. We argue that this view is incomplete or even too narrow in the sense that the phenomenon around adversarials is more general \emph{in nature} and actually depends on the problem setup. We conjecture that any differentiable perturbed optimizer (DPO) is prone to this new notion of attack similar to classical adversarials that we discuss in this paper. DPOs are a well studied, pragmatic approach to differentiability of general mathematical program (MP) solver by means of perturbation, consider \citep{papandreou2011perturb, berthet2020learning, gumbel1954statistical, bach2011learning} for reference. If our conjecture were to be true, then our new view on adversarial attacks would stand as a very general problem beyond learning to classify. While this might turn out to be more of a scientifically/mathematically valuable insight rather than practical implication, as we prove in this paper, there are examples that we can construct which are clearly of high relevance. As we will see, `classical' classification adversarial examples might still pose a higher significance in terms of research in deep learning as they pose a threat to trust and explainability, however, attacks on LPs certainly hold major significance as well if we consider that many real world applications of high relevance, such as energy systems or online navigation services, depend on them. There has been previous works where MPs such as LPs but also Mixed Integer Programs (MIP) \citep{wu2020stronger,tjeng2017evaluating} have been used to compute adversarial attacks but not where such optimization modules (LP, MIP) themselves have been confronted with the attacks. In fact, and also due to the recent interest in tightly integrating MPs and deep learning \citep{amos2017optnet, paulus2021comboptnet}, this extension of adversarial attacks beyond deep networks already significantly advances our understanding of adversarial attacks i.e., it is not just expressiveness that leads to uninterpretable solutions with counter-intuitive properties. These two key arguments serve as motivation to why studying adversarials in LPs (and more broadly MPs) is important beyond pure scientific inquiry.

Interestingly, it turns out that the new type of attack we formalize develops naturally from the Pearlian notion of Causality \citep{pearl2009causality} when starting from the formalism of classical adversarial attacks. Put differently, the mathematical theory of causality as given by Pearl provides the right formal tools to establish a reasonable interpretation of adversarial examples in LPs. Speaking of causality, the subject refers to a very general idea, in that understanding causal interactions is even central to human cognition and thereby of high value to science, engineering, business, and law \citep{penn2007causal}. In the last decade, causality has been thoroughly formalized in various instances \citep{pearl2009causality,peters2017elements,hernan2020causal}. At its core lies a Structural Causal Model (SCM) which is considered to be the model of reality responsible for data-generation. An SCM is a powerful model in that it is capable of many things. The SCM implies a graph structure over its modelled variables, \Rc{and furthermore when specified in its entirety it} can reason about (hidden) confounders, and of course handle both interventions and counterfactuals. The richness of the SCM has been crucial for its successful application for ML in marketing \citep{hair2021data}, healthcare \citep{bica2020time} and education \citep{hoiles2016bounded}. While we conjecture the applicability of our paper's results to the general class of DPO (which is an effective sub-class of MPs), the focus of this work will be to motivate, illustrate and finally prove formally that we can exploit an SCM's hidden confounders to construct a new type of attack based on the classical adversarial attacks in order to attack LPs---which is the very first, basic sub-class of MPs. We coin this new attack \textbf{Hidden Confounder Attacks}, since exploiting knowledge of hidden confounders is both a necessary and sufficient condition for the construction of these adversarial-style examples.

Overall, we make a number of key contributions:\ (1) We derive for the first time a novel, theoretical connection between causality's SCMs and LPs, by which we then (2) use the hidden confounders of the SCM to devise an adversarial-style attack---which we call Hidden Confounder Attack (HCA)---onto the LP showing that non-classification problems can be prone to adversarial-style attacks; (3) We study and discuss two classical LP families and one real world applied optimization problem to further motivate research on HCA and their potentially worrisome consequences if being ignored. For reproduction, we make our code repository publicly available.\footnote{ \url{https://github.com/zecevic-matej/Hidden-Confounder-Attacks}}

\section{Background and Related Work}\label{sec:bg}
In the following, we will briefly review the background on adversarial attacks as defined in their original setting of classification, then the formalism of LPs alongside two famous problem instances (linear assignment and shortest path, both of which we will use later on), and finally SCMs with their causal mechanism and hidden confounders. We use mathematical notation for (i) to \emph{precisely} specify and capture important ideas and (ii) to eventually prove our theoretical insights, however, the reader is invited to skip formal details as they are not central for grasping the new ideas proposed in our paper, yet, a consideration will provide technical understanding about assumptions, limitations and reach of what is being proposed.

\subsection{`Classical' Adversarial Attacks (Classification)}
We are in the setting of classification, specifically, image classification where the task of the model is to give the `right' label to a given image fed as model input. By using a simple optimization procedure, \citet{szegedy2013intriguing} were able to find what they called `adversarial examples', which they defined to be imperceptibly perturbed input images such that these new images were no longer classified correctly by the predictive neural model. Note how we specifically talk about \emph{neural} models here as in the regular deep learning context. \citet{goodfellow2014explaining} then proposed the Fast Gradient Sign Method (FGSM) that considers the gradient of the error of the classifier w.r.t\ to the input image. Mathematically, they investigated perturbations of the form
\begin{equation} \label{eq:adversarial}
    \pmb{\eta} := \epsilon \sign(\nabla_{\mathbf{x}} J(\mathbf{x}, y; \pmb{\theta})) \in \mathbb{R}^{w\times h\times c}
\end{equation} where $\mathbf{x}\in \mathbb{R}^{w\times h\times c}$ is the input image, $y\in \mathbb{N}$ a class label, $\pmb{\theta}$ are the neural function approximator parameter, $J\colon \mathbb{R}^{w\times h\times c} \times \mathbb{N} \rightarrow \mathbb{R}$ a scalar-valued objective function, $\sign{:} \mathbb{R} \rightarrow [-1,1]$ an element-wise sign function and $\epsilon \in \mathbb{R}$ a free-parameter. A perturbation $\pmb{\eta}$ would then account for mis-classification of the given predictive model $f(\mathbf{x};\pmb{\theta})$ i.e.,
\begin{equation} \label{eq:misclassification}
    f(\mathbf{x};\pmb{\theta}) = y \neq f(\mathbf{x} + \pmb{\eta};\pmb{\theta}).
\end{equation} The inequality in Eq.\ref{eq:misclassification} represents a possibly strongly significant divergence from the expected \emph{semantic} meaning (i.e., from a human inspector's perspective) of the class to be predicted. For example, imagine a photograph of a dog that is being classified by $f$ as a dog (that is $f$ predicts the label `dog'). However, the perturbed image $\mathbf{x}+\pmb{\eta}$ is not classified by $f$ as `dog' but rather as, say, `washing machine.' What came to a surprise for many is two-fold, (1) the new classification could be something drastically different e.g.\ not another animal like a cat but for instance a washing machine (2) from a human perspective the perturbed image would still lead to the same classification i.e., still a dog (put differently, the human inspector cannot tell a difference between the original and perturbed images). Naturally, said susceptibility (1-2) led to a significant increase in research interest regarding robustness (to adversarial examples) in neural function approximators evoking the narrative of ``attacks'' and subsequent ``defences'' on the inspected classification modules, as commonly found in alternate literature such as cyber-security \citep{handa2019machine}.

\subsection{Mathematical Programming / Optimization}
Selecting the best candidate from some given set with regard to some criterion (or objective) is a general description of MPs (or just `optimization'), which arguably lies at the core of machine learning and many applications in science and engineering since we are in search of models and solution that are somehow the `best.' Classification, e.g, can be considered as a special instance of mathematical programming, where the optimum is reached when the model is able to provide the correct label each time it is being queried. An important (if not, the most important and fundamental) optimization family are LPs that are concerned with the optimization of an objective function and constraints that are \emph{linear} in the respective optimization variables. LPs are being applied widely in the real world, e.g., energy systems \citep{schaber2012transmission}. 
More formally, the optimal solution of an LP is given by
\begin{align} \label{lp}
    \mathbf{x}^* &= \ \arg\max\nolimits_{\mathbf{x}} \text{LP}(\mathbf{x}; \mathbf{w},\mathbf{A},\mathbf{b}) \\ &= \ \arg\max\nolimits_{\mathbf{x}\in\mathcal{P}_{\mathbf{A},\mathbf{b}}} \inner{\mathbf{w},\mathbf{x}},
\end{align} where $\inner{\mathbf{a},\mathbf{b}}:= \mathbf{a}\tran\mathbf{b} = \sum_i a_ib_i \in\mathbb{R}$ is the inner product (dot product), $\mathbf{w}\in \mathbb{R}^n$ is called the weight/cost vector, $\mathbf{A}\in \mathbb{R}^{m\times n},\mathbf{b}\in \mathbb{R}^m$ are the constraint matrix and vector respectively, and finally $\mathcal{P}_{\mathbf{A},\mathbf{b}}$ is the solution polytope (or feasible region) i.e., the convex subset of state space $\mathbf{X}$ such that each $\mathbf{x}$ satisfies the constraints. Formally, $\mathcal{P}_{\mathbf{A},\mathbf{b}}:=\{\mathbf{x} | \mathbf{Ax} \leq \mathbf{b} \text{ and } \mathbf{x} \geq \mathbf{0}\}\subset \mathbb{R}^n$ (if clear from context, we abbreviate to $\mathcal{P}$). Table \ref{tab:lp} presents two classical problems that can be expressed as linear programs: the \emph{Linear Assignment} Problem (LA) and the \emph{Shortest Path} Problem (SP). Both problems formulate the optimization variable $\mathbf{x}\in \mathbb{R}^n$ with either $n=|A\times B|$ or $n=|E|$ to be a selector/indicator variable. That is, for LA we have the respective dimensions to mean matches between different worker and tasks, whereas for SP they denote the edges that are part of the fianl, selected path that should end up being the shortest path in the network. Although the original formulation of the LA and SP problems are actually \emph{integer} LP formulations, which are generally known to be NP-complete opposed to the less restrictive regular LPs since they require the solutions to be integers and not some arbitrary real number, both problems can still be solved in \emph{polynomial} time. However, extensions of regular SP like the Travelling Salesman or the Canadian Traveller problems are known to be NP- and PSPACE-complete respectively without any such benefits as LA/SP beg to offer.
\begin{table}[t]
\footnotesize
  \centering
  \scalebox{1.}{
    \begin{tabular}{ c p{1cm} | | p{1cm} c}
        $\begin{aligned}
        \forall i\in A: \sum_{j\in B} x_{ij} = 1 \\
        \forall j\in B: \sum_{i\in A} x_{ij} = 1
        \end{aligned}$ & & & $\displaystyle\sum_{(i,j)\in E} x_{ij} - \displaystyle\sum_{(i,j)\in E} x_{ji} = \begin{cases}
       1 &\quad\text{if} \ i=s \\
       -1 &\quad\text{if} \ i=t \\
       0 &\quad\text{else} \\ 
     \end{cases}$ \\
     
     $x_{ij} \in [0,1]$ & & & $x_{ij} \in [0,1]$ \\
     & & & \\
    \end{tabular}} \ \\ \ \\
 \caption{\textbf{Classical Problems formulable as LPs.} Linear Assignment (left; abbrev. LA) and Shortest Path (right; SP). In LA one matches ``workers'' to ``tasks'' according to their ``skills''. In SP one finds the ``quickest'', valid path (collection of edges) from some node $i$ to node $j$ within the given graph/network. The constraints on the left specify rules such as each worker can only have one task and any task can only have one worker, whereas the constraints on the right define a `valid' path, that is, if one enters a certain node, then one needs to also exit out of said node to continue with a valid path. \label{tab:lp}}
\end{table}

\subsection{Causality} 
The question of causality is a highly philosophical, timeless question and subject of study by the likes of Plato and his fellows, but only recently has the AI/ML community started investing more into causality as a means for the next generation of intelligent systems using new formalizations that capture certain key ideas rigorously. Following the Pearlian notion of Causality \citep{pearl2009causality}, an SCM is defined as a 4-tuple $\mathcal{M}:=\inner{\mathbf{U},\mathbf{V},\mathcal{F},P(\mathbf{U})}$ where the so-called structural equations
\begin{align}\label{eq:scm}
v_i = f_i(\pa_i,u_i) \in \mathcal{F}
\end{align} assign values (denoted by lowercase letters) to the respective endogenous/system variables $V_i\in\mathbf{V}$ based on the values of their parents $\Pa_i\subseteq \mathbf{V}\setminus V_i$ and the values of their respective exogenous/noise/nature variables $\mathbf{U}_i\subseteq \mathbf{U}$, and $P(\mathbf{U})$ denotes the probability function defined over $\mathbf{U}$. In other words, $f$ is the causal (possibly physical) mechanism that converts values of the parent variables to the values of the variable interest, or how Pearl says ``$V_i$ listens to $\pa_i$.'' We usually say that $\pa_i$ are the \emph{direct causes} of $V_i$. Note that, opposed to the Markovian SCM discussed in for instance \citep{peters2017elements}, the definition of $\mathcal{M}$ is semi-Markovian thus allowing for shared $U$ between the different $V_i$. Such a $U$ is also called \textbf{hidden confounder} since it is a common cause of at least two $V_i,V_j (i\neq j)$. Opposed to a hidden confounder, a ``common'' confounder would be a common cause from within $\mathbf{V}$ (that is, we would have a specific name for that given confounder, it would not be in $\mathbf{U}$). An important concept that the formalization later on will require is the concept of causal sufficiency. Following \citet{spirtes2010introduction}: ``The set of endogenous variables on which SCM $\mathcal{M}$ enacts is called \textit{causally sufficient} if there exist no hidden confounders.'' Put differently, if our modelled system has no unobserved confounders (or we simply assume it to be that way), then we can call our system causally sufficient. \Rc{While SCM provide a formalization (a language) for reasoning over (possibly hidden) confounders, the practical consideration of confounders is difficult and requires positing of (often times overly strong) assumptions. Both adjustment for confounders and the identification of confounding structures in the graph is a challenging task, which increases in difficulty when the confounder are unmeasured.} For completion's sake, we mention more interesting properties of any SCM. The SCM induces (i) a causal graph $G$, (ii) an observational/associational distribution denoted $p^{\mathcal{M}}$, and (ii) they can generate infinitely many interventional and counterfactual distributions using the $\doop$-operator\footnote{Loosely speaking, the $\doop$-operation ``overwrites'' structural equations.}.

\section{Step-by-Step Derivation of Adversarial Examples for Linear Programs}
This section covers our key contribution and develops it step-by-step from ground up. This main section is structured in the following manner: we first discuss MPs/DPOs and how our results are expected to transfer to those to motivate the overall research direction and justify the investigation of LPs as initial case. Then, secondly, we present how a naïve mapping of the classical adversarial attack framework fails for LPs in the sense that we could not claim it to be an adversarial example (or even something similar). Then, thirdly, we present how the tools from causality can provide additional semantics to formulate our new adversarial-style attack, which we refer to as Hidden Confounder Attacks (HCA). In the fourth subsection, we conclude with what we consider to be important mathematical insights for HCA.

\subsection{Overarching Hypothesis and the Importance of Differentiability}
In the past, different classes of MPs (LPs, MIPs) have been used defensively for verifying the robustness of neural learners to adversarial examples \citep{tjeng2017evaluating} and offensively for generating actual adversarial examples \citep{zhou2020generating}. Here, we are concerned with a fundamentally different research question: \emph{``How do adversarial attacks affect MPs\textbf{ themselves}?''}. That is, we turn the table and instead of considering MPs as a service to the system to be attacked, we consider the programs themselves to be the system under attack. Our overarching hypothesis for this and possible future work is the following:
\hypothesis{\textbf{Hypothesis:} \emph{Adversarial examples or attacks refer to a concept more general than that of classification in that it also affects MPs. Thereby, adversarial examples are a property of the problem specification and not per se a property of the expressiveness of deep models or of the classification task.}} To the best of our knowledge, we are the first researchers to ask and investigate this question thoroughly. Therefore, in order to establish an initial connection between adversarial attacks and MPs we will start off with the most basic class of MPs: Linear Programs. Since an adversarial attack typically depends on gradient/first-order information to determine where the perturbation (or attack) is most effective, we also require such first-order information from our LPs. One way to achieve this is to consider the class of ML models which inject some noise, which is distributed w.r.t.~some differentiable probability distribution, into the LP optimizer. These so-called perturbed models have differentiability properties because of that. Therefore, these perturbed models have also been considered for inference tasks within energy models \citep{papandreou2011perturb} and regularization in online settings \citep{abernethy2014online} as immediate consequence of said differentiability. Initial works in this research direction date back to the Gumbel-max \citep{gumbel1954statistical} and were recently generalized to \textit{Differentiable Perturbed Optimizers} (DPO) featuring end-to-end learnability \citep{berthet2020learning}. As stated in the initial section of this paper, we conjecture (that is believe to be true) that DPO are susceptible to the same (or similar) style of adversarial examples that we are developing in this paper. To formulate an LP optimizer, $\mathbf{x}^*(\mathbf{w}){=}\arg \max_{\mathbf{x}\in\mathcal{P}_{\mathbf{A},\mathbf{b}}} \inner{\mathbf{w},\mathbf{x}}$, as a DPO one requires only the existence of a random noise vector $\mathbf{z}\in \mathbb{R}^n$ with positive and differentiable density $p(\mathbf{z})$ such that for $\epsilon\in\mathbb{R}_{>0}$,
\begin{equation} \label{eq:perturbations}
    \mathbf{x}^*(\mathbf{\hat{w}})=\mathbb{E}_{p(\mathbf{z})} [\arg \max\nolimits_{\mathbf{x}\in \mathcal{P}_{\mathbf{A},\mathbf{b}}} \inner{\mathbf{w} + \epsilon \mathbf{z}, \mathbf{x}}],
\end{equation} where $\mathbf{x}\in\mathbb{R}^n$ is the optimization variable living in the solution polytope $\mathcal{P}_{\mathbf{A},\mathbf{b}}$ described by LP constraints $\mathbf{A},\mathbf{b}$, where $\mathbf{\hat{w}}:=\mathbf{w}+\epsilon\mathbf{z}$ is the peturbed LP cost parameterization, $\inner{\cdot, \cdot}\in \mathbb{R}\in\mathbb{R}^n$ the inner product and $\mathbb{E}_{p(\mathbf{x})}[f(\mathbf{X})]$ the expected value of random variable $\mathbf{X}$ under the predictive model $f$. Related work on differentiability of more general MPs like quadratic/cone programs \citep{agrawal2019differentiable} or linear optimization within predict-and-optimize settings \citep{mandi2020interior} generally rely on the Karush-Kuhn-Tucker (KKT) conditions. The general advantage of a perturbation method (as in Eq.\ref{eq:perturbations}) over the analytical approaches is its ``black-box'' nature i.e., we don't require to know what kind of MP we are dealing with, since we simply add stochasticity into the problem. In other words, we don't need to be experts on any specific MP (or any MP at all for that matter) to be able to reap the benefits of differentiability. This ``invariance'' to the underlying MP and the fact that differentiability is a necessary key concept behind adversarial attacks, leads to following (informally stated) conjecture.
\begin{conjecture}[\textbf{HCAs on MPs}{\normalfont, informal}]\label{con:mps}
Differentiability is a necessary condition for constructing Hidden Confounder Attacks on MPs (to be defined in Sec.\ref{subsec:math}).
\end{conjecture}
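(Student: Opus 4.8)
The plan is to argue for necessity by contraposition: I would show that if the mathematical program admits no differentiable solution (or objective) map with respect to the parameters that the hidden confounder controls, then the adversarial-style construction underlying an HCA cannot even be written down, let alone succeed. First I would unpack the formal definition of the HCA promised in Sec.~\ref{subsec:math}. By design the attack mirrors the classical FGSM perturbation of Eq.~\ref{eq:adversarial}, the only structural change being that the perturbed quantity is the SCM-generated cost vector $\mathbf{w}$ (rather than an image $\mathbf{x}$) and that the perturbation direction is read off through a hidden confounder. Concretely, the adversarial step has the shape $\pmb{\eta} = \epsilon\,\sign(\nabla_{\mathbf{w}} J)$ for some scalar objective $J$ built on the optimizer's output $\mathbf{x}^*(\mathbf{w})$, so the construction is \emph{definitionally} gradient-based.

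Second, I would make precise which gradient must exist. Since $J$ composes the solution map $\mathbf{w}\mapsto\mathbf{x}^*(\mathbf{w})$ with a downstream, confounder-sensitive loss, the chain rule forces the existence of $\nabla_{\mathbf{w}}\mathbf{x}^*$. For a \emph{raw} LP as in Eq.~\ref{lp}, however, $\mathbf{x}^*(\mathbf{w})$ is piecewise constant on the normal cones of the polytope $\mathcal{P}_{\mathbf{A},\mathbf{b}}$: it is locally constant in the interior of each cone (zero gradient) and discontinuous across cone boundaries (no gradient). Hence the FGSM-style step is either identically zero or undefined, and no attack direction is produced. This is exactly the gap that the perturbed reformulation of Eq.~\ref{eq:perturbations} is introduced to close, replacing $\mathbf{x}^*$ by the smoothed expectation $\mathbb{E}_{p(\mathbf{z})}[\cdot]$, which \emph{is} differentiable. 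So for LPs the statement reduces to the slogan that the attack exists only after differentiability has been manufactured, establishing necessity in the base case.

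Third, I would lift this from LPs to the general MP class by abstracting the solver to a black box $S_{\Theta}\colon \mathbf{w}\mapsto \mathbf{x}^*$ and observing that \emph{any} HCA, being an instance of the FGSM template of Eq.~\ref{eq:adversarial}, factors through $\nabla_{\mathbf{w}} S_{\Theta}$; differentiability of $S_{\Theta}$ (intrinsically, or after a DPO-style perturbation as in Eq.~\ref{eq:perturbations}) is therefore a prerequisite. The honest difficulty—and the reason this remains a \emph{conjecture} rather than a theorem—lies precisely here. For an arbitrary MP there need not exist a perturbation scheme $p(\mathbf{z})$ that both (i) renders $S_{\Theta}$ differentiable and (ii) preserves the confounding semantics inherited from the SCM, so one cannot a~priori rule out exotic, non-gradient constructions of an attack that still exploit the hidden confounder. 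Closing this gap would require either a formal characterization of HCAs that makes gradient-dependence intrinsic to \emph{every} admissible construction, or an impossibility result showing that no confounder-exploiting perturbation of a nondifferentiable $S_{\Theta}$ can induce the divergence of Eq.~\ref{eq:misclassification}. I expect this universality over all possible constructions to be the main obstacle.
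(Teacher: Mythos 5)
First, a point of order: the paper does not prove this statement at all. It is stated as an informal \emph{conjecture} (Conj.~\ref{con:mps}) and remains one; the only support offered is the surrounding discussion that perturbation-based smoothing (Eq.~\ref{eq:perturbations}) is ``black-box'' with respect to the underlying MP and that differentiability is a key ingredient of classical adversarial attacks, plus the reiteration in the Conclusive Discussion that HCAs beyond LPs are to be studied in future work. So there is no paper proof to compare yours against; the relevant question is whether your argument would settle the conjecture, and it would not---as you yourself concede in your final paragraph.

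The concrete gap is in your second step, where you claim the attack is ``definitionally gradient-based.'' It is not, under the paper's own formalism. Definition~\ref{def:hca} makes $\phi_{\mathcal{M}}$ prone to HCA if there exists an injective $h$ reading the hidden confounder and \emph{some} perturbed cost vector $\mathbf{\hat{w}}$ with $h(\mathbf{x}^*(\mathbf{w}))\neq h(\mathbf{x}^*(\mathbf{\hat{w}}))$; nothing there constrains how $\mathbf{\hat{w}}$ is obtained. Gradients enter only through Assumption~\ref{ass:resolution}, i.e., $\mathbf{\hat{w}}=\mathbf{w}+\epsilon\,\nabla_{\mathbf{w}}F$, which the paper invokes as a \emph{sufficient} device in the ``$\implies$'' direction of Theorem~\ref{thm:confhca}, not as part of what an HCA is. Consequently your contraposition (``no differentiability $\Rightarrow$ the construction cannot even be written down'') does not follow from the paper's definitions: an adversary could exhibit the required $\mathbf{\hat{w}}$ by enumeration or combinatorial search over the polytope's normal fan, with no derivative in sight. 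Indeed, your own observation that the raw LP solution map is piecewise constant (gradients zero or undefined) cuts against you, since the paper nonetheless constructs HCAs on LPs; this suggests differentiability is a convenience of the FGSM-style construction rather than a necessity, and it is precisely the possibility of non-gradient constructions---which you correctly flag at the end---that keeps the statement a conjecture. A genuine proof would have to either build gradient-dependence into the definition of an admissible attack or rule out every non-gradient route to property 2 of Definition~\ref{def:hca}; neither you nor the paper does this.
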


\subsection{Naïve Mapping or ``Trying to Make Sense of What Adversarial Examples Could Mean in LPs''}
Let's start our derivation of HCA by first providing a naïve mapping/perspective between/for the classical adversarial attack and the famous class of LPs known as Linear Assignment (LA), both of which have been previously introduced in Sec.\ref{sec:bg}. Mathematically, the following correspondence can be found,
\begin{equation}
\begin{alignedat}{4}\label{eq:mapping}
    \mathbf{x} + \pmb{\eta} &:= \mathbf{\hat{w}}, \quad &&f_{\theta} &&:= \mathbf{x}^*(\cdot), &&\\
    y &:= \mathbf{x}^*(\mathbf{w}), \quad &&J &&:= F(\mathbf{\hat{w}}, \mathbf{w}), &&
\end{alignedat}
\end{equation} where $\mathbf{x}$ is the feature vector (e.g.\ an image), $y$ the class label, $f_{\theta}$ the neural predictive model, and $J$ the cost function (e.g.\ mean-squared error)---all of these symbols follow the notation from \citep{goodfellow2014explaining}. From the LP perspective, we interpret $\mathbf{w}\in\mathbb{R}^{|A|\times|B|}$ as describing the suitability of worker $a\in A$ for job $b\in B$ and the optimal solution $\mathbf{x}^*(\mathbf{w})\in [0,1]^{|A|\times |B|}$ as indicators highlighting the matched pairs $(a_i,b_j)$. The only addition we have to make to achieve the naïve mapping to adversarials is some distance measure $F$ acting on the original $\mathbf{x}^*(\mathbf{w})$ and the expected perturbed solution $\mathbf{x}^*(\mathbf{\hat{w}})$. This is necessary since we need to allow for a ``class'' change between solutions that should occur (or are caused) through an adversarial attack i.e., in the case of LA, $F$ could be for instance the Structural Hamming Distance \citep{hamming1950error}. Also note, in slight abuse of notation, our program solver $\mathbf{x}^*(\cdot)$ denotes $\arg\max\nolimits_{\mathbf{x}} \text{LP}(\mathbf{x}; \mathbf{w},\mathbf{A},\mathbf{b})$.

Having completed the naïve mapping in LA specifically, we could then naïvely view each optimal matching ``code'' $\mathbf{x}^*(\mathbf{w})$ as a certain class (or label) and then the gradient $\nabla_{\mathbf{w}} F$ could be used for performing an `adversarial attack' such that the `class' changes (significantly) while the input remains approximately the same. As one can easily make out, the major problem being faced here is that there exists no ``semantic impact'' to be observed for the human inspector akin to a neural network wrongly classifying a dog (small animal) as a plane (big travel machine). In other words, there is change but said change is not significant (or surprising, and thus not interesting). To make this point more clear, we summarize our key insight about adversarial examples obtained by looking at our naïve mapping onto LPs:
\hypothesis{\textbf{Missing Semantic Component.} \textit{The human inspector's invariance to the adversarial example is characteristic of an adversarial attack (e.g.\ still dog-looking image being classified as depicting a washing machine), while a naïve mapping to LPs as in (\ref{eq:mapping}) leaves out said human component. In other words, there is no general human expectation towards different optimal solutions to an LP since humans measure LPs only on their objective and therefore different optimal solutions are not `different.'}}

\subsection{Concepts from Causality Provide the Missing Semantic Component}
In the previous section we concluded that there is a missing semantic component when naïvely mapping between adversarials and LPs. Yet, the optimal solutions when considered in terms of codes as in the LA example will actually significantly \emph{deviate} from each other. This deviation (or difference) seems to suggest that there exists some more `fundamental' difference in solution albeit not for the specific optimization objective at hand since the cost values will remain similar (or even the same). But as suggested by the missing semantic component, a human inspector will have no general expectation towards either of the LP solutions. To put it differently, ``they look different but that is that.'' Nonetheless, to complete the picture it is worth taking a step back and observing the LP from a `meta' level perspective. \emph{On this meta level, we can ask the question of how the LP cost vector $\mathbf{w}$ was provided in the first place.} Here causality and its SCM come into play. The SCM $\mathcal{M}$ defines a mechanistic data-generating process which will generate the observational probability distribution $p^{\mathcal{M}}$ that the human modeller usually observes an empirical fraction of, denoted as data $\mathbf{D}$. So, to loop back to the meta-question of how the LP parameterization came to be, we observe the following relation for some function $\phi$:
\begin{equation}\label{eq:parameters}
    \phi(\mathbf{D}) = \mathbf{w} \quad \text{where} \ \mathbf{D}\sim p^{\mathcal{M}}.
\end{equation} According to Eq.\ref{eq:parameters}, the human modeller that takes the observed data as a basis for determining the cost vector of the LP and then uses some function-mapping between the SCM and the LP denoted as $\phi$ to produce said cost vector. To give a concrete example of such a modelling, consider Fig.\ref{fig:parameters} in which the human modeller observes a data set $\mathbf{D}:=\{(h_i,p_i)\}_i^n\sim p(H,P)=p^{\mathcal{M}}$ but \textbf{no} other information. The human modeller \emph{does neither} observe the SCM induced distribution $p^{\mathcal{M}}$ \textit{nor} any more information about the partial SCM $\mathcal{M}$ which would include knowledge on the structural equations $f_H,f_P$ and the hidden confounder $U_C$ (i.e., $U_C$ is shared by both equations). Therefore, also no information on the true SCM $\mathcal{M}^*$ either, where $U_C$ would be part of the endogenous variables (i.e., there would also be $f_W$ and all $U_C$ being replaced by $W$ standing for Wealth). The only knowledge available is the data set $\mathbf{D}\sim p^{\mathcal{M}}$ which numerically describes the Health ($H$) and a general notion of Vaccine Priority ($P$) of certain individuals. Note that a simple linear regression on the data shown in Fig.\ref{fig:parameters} reveals the existence of a causal relation between $H$ and $P$ (but not the direction). The true causal direction is read as ``lower health values cause higher priority values'' and written $H\rightarrow P$. However, $P$ describes other factors as well (for instance the age of an individual) since $P$ also depends on $U_P$ (and not only on its cause $H$ and the hidden confounder $U_C$).
\begin{figure}[t!]
    \centering
    \includegraphics[width=1\columnwidth]{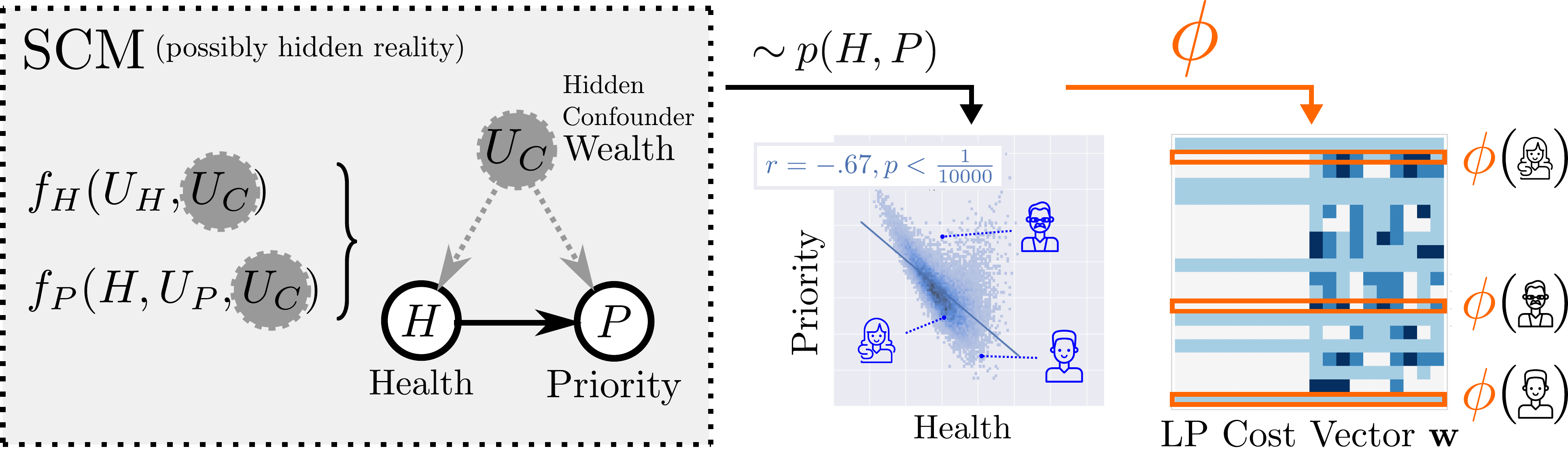}
    \caption{\Ra{\textbf{Intriguing LP-Parameterizations Based on SCM.} The observed observational distribution $p(H,P)$ was generated by some unobserved SCM. Even if we had \textit{some} SCM, it might not be the actual underlying, complete SCM i.e., there could still be hidden confounders in our estimate. The cost vector $\mathbf{w}$ of an LP can be viewed as a function $\phi$ applied to population individuals $(h,p)\sim p(H,P)$. \textit{Intriguingly, $\phi$ may very well be unaware of confounders in the true SCM.} (Best viewed in color.)}}
    \label{fig:parameters}
\end{figure} 

As in the previous section, let's consider an LA problem as our LP instance. We will make the example explicit. The setting is that of a pandemic and recently it has been announced that there is a new vaccine that can help in stopping the pandemic. For this, people need to get vaccinated and so it is now up to a human modeller scheduled for making a plan on assigning the empty vaccine spots. Unfortunately, the amount of spots is limited and so there must be some sort of rule to decide how to actually assign the available free spots. The human modeller is trying to find the \emph{optimal matching} of individuals (based on the data that covers relevant information/features about the individuals) to respective, available vaccination spots. The human modeller might choose to find the cost parameterization of the LP that will determine the optimal matching by following some sort of \emph{policy} (or rule) like \textit{``individuals of low health and high priority should be matched to vaccine spots, while others can wait}.'' More importantly, in this case, we would now argue that the human modeller \emph{implicitly} performed a mapping $\phi$ as in Eq.\ref{eq:parameters} based on the observed data, and that this $\phi$ essentially captures the policy description from before. Both interestingly and intriguingly, by construction, $\phi$ \emph{does not} consider the hidden confounder $U_C$---that might be viewed as something like wealth of an individual---since $U_C$ is not even defined within the data distribution accessible to the human modeller (since $\mathcal{M}\neq\mathcal{M}^*)$. This ignorance is the key to defining a meaningful adversarial-style attack on LPs since we can use it to explicate what a change in optimal solutions could mean. In informal terms, we are now ready to formulate our main idea of the paper:
\begin{theorem}[\textbf{HCAs on LPs}{\normalfont, informal}]\label{thm:informal}
Let $\phi_{\mathcal{M}}$ denote an LP parameterization based on SCM $\mathcal{M}$ while $\mathcal{M}^*$ denotes the `true' or optimal SCM for the phenomenon of interest. Any $\phi_{\mathcal{M}}$ that identifies individuals in $\mathcal{M}$ is prone to Hidden Confounder Attacks (to be defined in Sec.\ref{subsec:math}) unless $\mathcal{M}=\mathcal{M}^*$.
\end{theorem}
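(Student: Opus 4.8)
The plan is to formalize the informal Theorem~\ref{thm:informal} by making precise what it means for a parameterization $\phi_{\mathcal{M}}$ to ``identify individuals'' and what the attack achieves, and then to exhibit, for any causally insufficient $\mathcal{M}\neq\mathcal{M}^*$, a perturbation that is benign on the observed variables of $\mathcal{M}$ but acts systematically along the hidden confounder $U_C$. Concretely, I would first fix the setup of Eq.~\ref{eq:parameters}: the modeller sees $\mathbf{D}\sim p^{\mathcal{M}}$ over endogenous variables $\mathbf{V}$ only, computes $\mathbf{w}=\phi(\mathbf{D})$, and solves the DPO $\mathbf{x}^*(\mathbf{w})$ of Eq.~\ref{eq:perturbations}, which is differentiable in $\mathbf{w}$ by construction of the perturbed optimizer \citep{berthet2020learning}. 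The key structural fact I would extract is that since $\mathcal{M}\neq\mathcal{M}^*$ is only semi-Markovian, there exists a hidden confounder $U_C$ that is a common cause of at least two endogenous variables but is absent from $\mathbf{V}$, so $\phi$ is by definition a function of $\mathbf{V}$ alone and hence invariant to any reparameterization of the population that holds the $\mathbf{V}$-marginals fixed while varying the latent coordinate $U_C$.

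The main construction proceeds in three steps. First, I would make ``identifies individuals'' precise: $\phi_{\mathcal{M}}$ assigns to each individual a cost entry determined by that individual's realized endogenous values $(h_i,p_i)$, so two individuals with identical $\mathbf{V}$-values but distinct $U_C$-values receive identical LP treatment. Second, using the differentiability of $\mathbf{x}^*(\cdot)$, I would import the classical FGSM perturbation of Eq.~\ref{eq:adversarial} into the LP setting via the naive mapping Eq.~\ref{eq:mapping}, defining $\hat{\mathbf{w}}=\mathbf{w}+\epsilon\,\sign(\nabla_{\mathbf{w}}F(\hat{\mathbf{w}},\mathbf{w}))$ so that the adversarial cost stays $\epsilon$-close to $\mathbf{w}$ both value-wise and in objective value $\inner{\mathbf{w},\mathbf{x}^*(\hat{\mathbf{w}})}\approx\inner{\mathbf{w},\mathbf{x}^*(\mathbf{w})}$. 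Third---the crux---I would choose the distance $F$ (e.g.\ the Hamming-type measure already suggested for LA) to be sensitive to the \emph{latent} grouping induced by $U_C$: although $F$ cannot see $U_C$ directly, the adversary who knows $\mathcal{M}^*$ can select among the near-degenerate optimal matchings the one that reshuffles assignments along $U_C$, producing a solution whose $U_C$-marginal deviates drastically while its $\mathbf{V}$-behaviour is essentially unchanged. This exactly instantiates the missing semantic component: the attack is imperceptible to the $\mathbf{V}$-only modeller yet semantically meaningful with respect to the omitted cause.

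To close the ``unless $\mathcal{M}=\mathcal{M}^*$'' direction, I would argue the contrapositive: if $\mathcal{M}$ is causally sufficient (no hidden confounder, so $\mathcal{M}=\mathcal{M}^*$ in the relevant sense), then every latent coordinate along which the adversary could hide a systematic shift is already an endogenous variable visible to $\phi$, and any perturbation that is $\epsilon$-small on $\mathbf{w}$ necessarily produces only an $O(\epsilon)$ change on all observable quantities---so no ``class change'' with invariant input exists, and the construction fails. The existence of the needed reshuffling relies on the LP solution polytope $\mathcal{P}_{\mathbf{A},\mathbf{b}}$ admitting multiple near-optimal vertices that agree on the coarse $\mathbf{V}$-statistics; for the LA and SP polytopes this degeneracy is generic, which is why the accompanying existential proofs for those families will be needed to certify that the construction is non-vacuous.

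I expect the main obstacle to be making the ``invariant on $\mathbf{V}$, large on $U_C$'' claim quantitatively honest rather than merely suggestive. Since $\phi$ never references $U_C$, one must show that the perturbation direction $\nabla_{\mathbf{w}}F$ can be aligned with the latent axis \emph{through the geometry of $\mathcal{P}$ and the smoothing induced by $p(\mathbf{z})$}, without $\phi$ or $F$ having access to $U_C$; this requires that the family of optimal solutions have enough freedom (degenerate or near-degenerate optima) for the latent-sensitive choice to sit within the $\epsilon$-ball around $\mathbf{w}$. Establishing this freedom in general is the delicate point, and it is precisely where the confounding inherited from the semi-Markovian SCM---rather than any property of the LP alone---does the essential work.
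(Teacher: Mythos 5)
Your ingredients largely match the paper's: a hidden confounder invisible to $\phi_{\mathcal{M}}$, a unit-level (``individual-identifying'') parameterization, near-degenerate optima in an $\epsilon$-ball around $\mathbf{w}$, a perturbation that flips the returned solution, and an adversary who evaluates solutions through an SCM that does contain the confounder. These correspond, respectively, to the paper's causal insufficiency hypothesis, the \emph{integral} property (Def.~\ref{def:integral}), Assumption~\ref{ass:epsball}, Assumption~\ref{ass:resolution}, and the alternate SCM $\mathcal{M}'$ in Def.~\ref{def:hca}. However, there is a genuine gap at exactly the point you flag as ``the delicate point'': your plan requires showing that the perturbation direction can be aligned with the latent axis so that the $U_C$-marginal of the solution deviates \emph{drastically} while $\mathbf{V}$-behaviour is unchanged, and you do not resolve this. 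The paper never needs to. Its formalization (Def.~\ref{def:hca}) defines proneness to HCA only through the existence of an \emph{injective} function $h$ that factors through the units' confounder values, and property 2 asks merely that $h(\mathbf{x}^*(\mathbf{w}))\neq h(\mathbf{x}^*(\mathbf{\hat{w}}))$ for \emph{some} $\mathbf{\hat{w}}$. Injectivity of $h$ plus Assumption~\ref{ass:resolution} (the solver returns a different optimum under perturbation) immediately yields this, with the integral property guaranteeing $h$ can be constructed at all. The ``drastic skew along the confounder'' is demonstrated only empirically in the examples; it is not part of what the theorem asserts or what its proof establishes. By aiming at the stronger quantitative statement, you have set yourself a burden the paper deliberately avoids by its choice of definitions, and that burden remains undischarged in your proposal.

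Your converse (``unless $\mathcal{M}=\mathcal{M}^*$'') direction also contains a false step: you claim that when $\mathcal{M}$ is causally sufficient, ``any perturbation that is $\epsilon$-small on $\mathbf{w}$ necessarily produces only an $O(\epsilon)$ change on all observable quantities.'' This is not true for LPs: optimal vertex solutions jump discontinuously under arbitrarily small cost perturbations---indeed this discontinuity is precisely what Assumption~\ref{ass:resolution} exploits in the forward direction, so your converse argument would contradict your own construction. The paper's converse is instead purely definitional: an HCA by Def.~\ref{def:hca} requires a hidden confounder $C$ of $\mathcal{M}$ to exist, so causal sufficiency (no hidden confounders) rules out any such attack with no stability argument needed.
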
 Put loosely, choosing the `wrong' $\phi_{\mathcal{M}}$, one that does not represent the true, underlying SCM $\mathcal{M}^*$, allows for an adversarial-style attacks on LPs. In summary we state:
\hypothesis{\textbf{Bias in Vaccine Spot Assignments, Example from Fig.\ref{fig:leadex}:} \textit{The LP-parameterization $\phi_{\mathcal{M}}$ based on the SCM $\mathcal{M}$, with $\mathbf{w}=\phi(\mathbf{D})$ from Eq.\ref{eq:parameters} where $\mathbf{w}$ is the LP cost vector, follows the previously, informally described policy, so $\phi_{\mathcal{M}}$ models only $H,P$ since $U_C$ is not even defined in $\mathbf{D}$. Therefore, Thm.\ref{thm:informal} predicts the existence of a HCA for $\textnormal{LP}(\mathbf{x};\mathbf{w},\mathbf{A},\mathbf{b})$. Fig.\ref{fig:leadex} reveals such an example HCA: the attack is unnoticable in visual terms, just as for a classical adversarial example, and so is the difference in cost w.r.t.\ the optimal matchings---however, w.r.t.\ to the wealth value of each individual, the new matching shows a significant skew towards individuals (or data points) of higher wealth value.}} 

\Rc{\textbf{Rationale for Hidden Confounders within Mathematical Programming.} Considerably orthogonal at first glance as this bridge might seem, both Pearlian causality (and associated concepts such as hidden confounders) and Mathematical Programming are concerned with \emph{modelling assumptions} that lie outside the data. While the MP is arguably data-independent per definition of the paradigm, MPs such as LPs can in fact be modelled through data. To give an example, a warehouse that is concerned with profits will record its sales and eventually settle on optimizing the profits based on the recorded data (using both the data features or variables and insights on customer preferences), which in turn is formulated as an MP. Since data that we are given (or that is being recorded in the example of the warehouse) is assumed to be governed by some underlying dynamics, it is safe to assume that we can denote said underlying data-generating process with causality's center-piece formalism being that of SCM. In conclusion, we naturally find situations in which our approximation to the SCM that might be underlying our data (what is typically called our \emph{induction hypothesis}) is insufficient in that there are hidden confounders. However, since we used the SCM's data to model an MP previously, we now have an eventuality of hidden confounders within our MP instance. This rationale is the very basis of the present manuscript that proves the existence of hidden confounders within MPs that are modelled in accordance to an insufficient SCM (even in the cases where the MP generation is unaware of the assumptions placed on the SCM part of the modelling of the cost and constraint vectors).}

Next, to complete the discussion, we finally formalize the informal notions presented in this section. However, it is worth noting that technical parts of this paper can be safely skipped as understanding them is not central to understanding the overall idea (as just now presented informally). Naturally, reading the technical part allows for a precise understanding of the assumptions and key aspects to our definition of HCA and subsequent insights. 
\begin{figure*}[!t]
    \centering
    \includegraphics[width=1\textwidth]{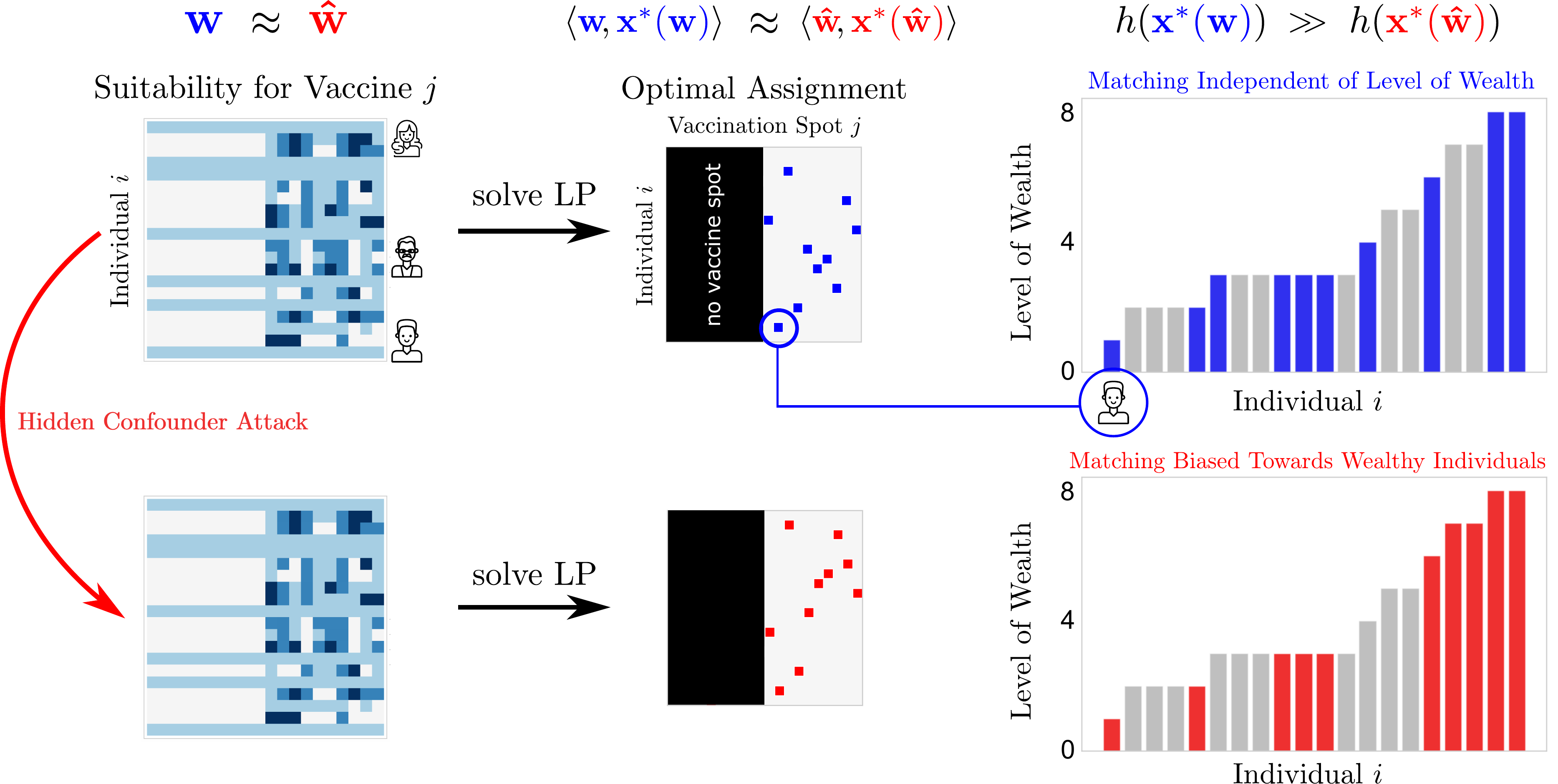}
    \caption{\Ra{\textbf{Lead Example: Hidden Confounder Attack on LP.} 
    A real world inspired matching problem under attack. The new matching shows significant bias for individuals with high wealth value. The adversarial LP cost vector is close to the original both value-wise (\textbf{left}) and cost-wise w.r.t.~their optimal solution (\textbf{middle}) which means in words that health-wise people in higher need of vaccination are still guaranteed a vaccine spot. However, w.r.t.~hidden confounder (Wealth) the adversarial solution drastically deviates (\textbf{right}) i.e., the distribution of vaccines is being skewed towards people of higher wealth, which can circumvent the originally intended policy. (Best viewed in Color.)
    }}
    \label{fig:leadex}
\end{figure*}

\subsection{Formalizing the Newly Proposed LP Attacks ``Hidden Confounder Attacks''} \label{subsec:math}
\Ra{\textbf{Notation.} We use standard notation from deep learning for various mathematical base concepts such as scalars ($a$), vectors ($\mathbf{a}$), matrices ($\mathbf{A}$), sets ($A$), parameterized functions ($f_{\theta}$). In the cases where a set is actually denoted with (what is considered typically to be) matrix notation, we consider said set to be of special meaning in that it is literature-specific notation where the different considered literatures are adversarial examples, linear programming and causality. In the following, we go over each. From literature specific to adversarial examples we use: perturbation ($\pmb{\eta}$). We also make use of linear programming specific notation: optimal solution ($\mathbf{x}^*$), cost vector or matrix ($\mathbf{w}$), inner product ($\langle\cdot,\cdot\rangle$), linear constraints ($\mathbf{A},\mathbf{b}$). Finally, since HCA are based on causality's conception of confounders, we use causality notation as well: SCM ($\mathcal{M}$), exo- and endogenous variables ($\mathbf{U},\mathbf{V}$). With the matching of the two paradigms as in Eqs.\ref{eq:mapping}, we can consider the notational extensions for HCA introduced in this work as a `generalization' of the previous notations deployed for discussing adversarial-style attacks. In other words, either side of the definitions (in the mapping of Eqs.\ref{eq:mapping}) can be used to denote HCA. The authors opt for notational convenience in the sense that notation which is useful for a given context should be deployed e.g.\ when discussing optimal solutions then $\mathbf{x}^*(\mathbf{w})\neq \mathbf{x}^*(\mathbf{w}')$ is easier to parse than $y \neq y'$ since the former notation explicates that optimal solutions under different cost vectors are being considered, whereas the latter might only be interpreted as a difference in scalars.} 
We now introduce extended fosrmalism to capture all the previously established ideas precisely, to define HCA and discuss its properties. We first start with Eq.\ref{eq:parameters}.
\begin{defin} \label{def:phi}
We call a function $\phi_{\mathcal{M}}$ LP-parameterization based on SCM $\mathcal{M}$ if for an observational data set $\mathbf{D}\sim p^{\mathcal{M}}$ we can define an $\textnormal{LP}(\mathbf{x}; \phi(\mathbf{D}),\mathbf{A},\mathbf{b})$.
\end{defin}
\noindent By default (unless clear from context), we might refer to such $\phi_{\mathcal{M}}$ simply as LP-parameterizations. Some LP-parameterizations fulfill a property that ``identifies individuals in $\mathcal{M}$'' which we define next.
\begin{defin} \label{def:integral}
Let $\phi_{\mathcal{M}}$ be an LP-parameterization based on SCM $\mathcal{M}$ with $\phi_{\mathcal{M}}(\mathbf{D})=\mathbf{w}=(w_1,...,w_k)$ and $\mathbf{D}=\{\mathbf{d}_i\}^n_{i}$. We call $\phi_{\mathcal{M}}$ integral if it satisfies:
\begin{equation*}
    \forall i\in \{1,...,n\}\ldotp\exists I\subset\{1,...,k\}\ldotp(\mathbf{d}_i = \phi_{\mathcal{M}}^{-1}((w_j)_{j\in I}).
\end{equation*}
In words, the parameterization decomposes on the the data point (or unit) level.
\end{defin}
\noindent These first two definitions give us the ability to talk about `special' types of LPs, namely those that underly some SCM and further some that even allow for talking about the different units $U_j$ of these SCM (exogenous variables). A simple yet important insight that immediately follows is:
\begin{corollary} \label{cor:integral}
The LP problems Linear Assignment (LA) and Shortest Path (SP) have integral LP-parameterizations.
\end{corollary}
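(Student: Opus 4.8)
The plan is to prove the corollary constructively: for each of the two problems I exhibit a concrete LP-parameterization $\phi_{\mathcal{M}}$ and verify Definition \ref{def:integral} directly. The common idea is that both LA and SP have cost vectors with a natural \emph{per-unit} block structure, so that each data point $\mathbf{d}_i$ governs its own block of weight coordinates; choosing the per-unit map to be injective then furnishes the local inverse that Definition \ref{def:integral} demands. Since the statement only asserts \emph{existence} of integral parameterizations, it suffices to display canonical injective choices.

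For Shortest Path I would identify the data points with the edges, so that $\mathbf{D} = \{\mathbf{d}_i\}_{i=1}^{n}$ with $n = |E|$ and the cost vector lives in $\mathbb{R}^{|E|}$, i.e.\ $k = n$. Define $\phi_{\mathcal{M}}$ coordinate-wise by $w_i = g(\mathbf{d}_i)$ for a single fixed injective map $g$ (for instance a coordinate projection that is injective on the support of $p^{\mathcal{M}}$, or simply $g = \mathrm{id}$ when the edge datum is already the scalar length). This is a valid LP-parameterization in the sense of Definition \ref{def:phi}, since substituting $\mathbf{w}$ into the standard SP formulation of Table \ref{tab:lp} yields a well-defined $\textnormal{LP}(\mathbf{x};\mathbf{w},\mathbf{A},\mathbf{b})$. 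Taking $I = \{i\}$ to be the singleton, we have $\phi_{\mathcal{M}}^{-1}((w_j)_{j\in I}) = g^{-1}(w_i) = \mathbf{d}_i$, so the integral property holds with the finest possible decomposition.

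For Linear Assignment I would identify the data points with the individuals (workers) $A$, so $n = |A|$ and $k = |A|\times|B|$, and group the weight coordinates into rows $I_i = \{(i,j) : j \in B\}$, one block per individual. Define $\phi_{\mathcal{M}}$ by $w_{ij} = g_j(\mathbf{d}_i)$ for per-job maps $g_j$, requiring at least one distinguished job $j_0$ to carry an injective $g_{j_0}$. This again parameterizes the standard LA formulation of Table \ref{tab:lp}, so it is a valid LP-parameterization. The blocks $\{I_i\}_i$ are pairwise disjoint and cover $\{1,\dots,k\}$, and from the block $(w_{ij})_{j\in B}$ one recovers $\mathbf{d}_i = g_{j_0}^{-1}(w_{i j_0})$; hence $\phi_{\mathcal{M}}^{-1}$ is well-defined on each block and Definition \ref{def:integral} is satisfied.

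The only real obstacle is making precise the sense in which the global map $\phi_{\mathcal{M}}$ admits a block-wise inverse $\phi_{\mathcal{M}}^{-1}$: Definition \ref{def:integral} writes $\phi_{\mathcal{M}}^{-1}((w_j)_{j\in I})$, which implicitly requires that $\phi_{\mathcal{M}}$ factor through independent per-unit maps whose images can be inverted in isolation. I would therefore state explicitly that the constructed $\phi_{\mathcal{M}}$ acts separately on each $\mathbf{d}_i$ with no cross-unit mixing, which guarantees both that the blocks $I_i$ are disjoint and that injectivity of the per-unit map transfers to invertibility of the restriction of $\phi_{\mathcal{M}}$ to that block. No genericity or uniqueness argument is needed, so the construction closes the proof.
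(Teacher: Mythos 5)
Your construction matches the paper's own proof essentially exactly: the paper likewise decomposes LA by mapping each data point to the slice of cost-vector coordinates belonging to one unique worker across all jobs, and SP by the direct one-to-one correspondence between data points and edges. Your additional insistence on injective per-unit maps (so that $\phi_{\mathcal{M}}^{-1}$ is well-defined on each block) is a refinement the paper's much terser proof leaves implicit, but it is not a different route.
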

\begin{proof}
For LA, simply map each data point indexed by $i$ to the cost vector slice indexed by indices in the set $I$ s.t.\ each $i$ refers to the same unique $a$ from the ``workers" set $A$ for all the different ``jobs" $b\in B$. I.e., one data point sampled from the SCM's observational distribution will correspond to one worker in the LP cost vector. For SP, there is a more direct one-to-one correspondence where each data point is mapped to a unique edge in the graph.
\end{proof}
\noindent This insight is important since LA and SP constitute arguably the two most important LPs in existence as accounted by their widespread use in applications in ML and beyond. Before we can define HCA, we need to state our two main assumptions that are necessary to HCA:
\begin{assumption} \label{ass:epsball}
For some fixed constraints $\mathbf{A},\mathbf{b}$ let $\mathbf{X}^*(\mathbf{w}):=\{\mathbf{x}\mid\mathbf{x} = \arg\max\nolimits_{\mathbf{x}} \textnormal{LP}(\mathbf{x}; \mathbf{w},\mathbf{A},\mathbf{b})\}$ denote the set of optimal LP solutions under $\mathbf{w}$. Further let $B^\mathbf{w}_{\epsilon}$ denote an $\epsilon$-Ball with $\epsilon>0$ around some LP cost $\mathbf{w}$. Then there exists a $\mathbf{\hat{w}}\in B^\mathbf{w}_{\epsilon}$ such that $|\mathbf{X}^*(\mathbf{\hat{w}})|>1$. In words, we can find an $\epsilon$-close LP instance with multiple optimal solutions.
\end{assumption}
\begin{assumption} \label{ass:resolution}
Like before, let $\mathbf{X}^*(\mathbf{w})$ be the set of optimal LP solutions under cost vector $\mathbf{w}$. Further, let $\mathbf{x}^*(\mathbf{w})\in\mathbf{X}^*(\mathbf{w})$ denote the solution returned by our solver and let $\mathbf{\hat{w}}=\mathbf{w}+\epsilon \ \nabla_{\mathbf{w}} F$ be the perturbed cost vector for some function $F$ and $\epsilon>0$. We assume $\mathbf{x}^*(\mathbf{w})\neq \mathbf{x}^*(\mathbf{\hat{w}})$. In words, the perturbed LP instance returns a different optimal solution.
\end{assumption}
\noindent Arguably, both assumptions are fairly weak and compare to what can be found in standard adversarial learning literature, yet, it is crucial to make them explicit both for transparency on the given setting and for proving our theorem of HCAs on LPs. Now, we are set to give the technical description of what a Hidden Confounder Attack really is:
\begin{defin} \label{def:hca}
Let $\phi_{\mathcal{M}}$ be an LP-parameterization based on SCM $\mathcal{M}$. We call $\phi_{\mathcal{M}}$ prone to Hidden Confounder Attacks if there exists an injective function $h: \mathbf{X}^*\rightarrow \mathbb{R}$ with properties
\begin{enumerate}
\setlength{\itemindent}{.7in}
    \item $h(\mathbf{x}^*)=f(\displaystyle\bigoplus_{i\in\mathbf{x}^*} \mathcal{M}^{\prime}_C(i))$ and
    \item $\exists \mathbf{\hat{w}}\ldotp(h(\mathbf{x}^*(\mathbf{w}))\neq h(\mathbf{x}^*(\mathbf{\hat{w}})))$
\end{enumerate}
for some function $f: \mathbb{R}\rightarrow \mathbb{R}$, aggregation function $\bigoplus$ over units $i$ active in $\mathbf{x}^*$ (e.g.\ the sum for all matched ``workers"), and LP cost vector $\mathbf{w}$ where 
\begin{equation*}
    \mathcal{M}^{\prime}_C: \mathbb{N}\rightarrow \text{Val}(C)
\end{equation*} is the projection of a unit $i$ to its respective confounder value $\mathcal{M}^{\prime}_C(i) \in \text{Val}(C)$ where $\mathcal{M}^{\prime}$ is an alternate SCM containing $C$. That is, $C$ is a hidden confounder of $\mathcal{M}$.
\end{defin}
\noindent In simple terms, property 1 in Def.\ref{def:hca} refers to the uncountable number of functions that can leverage information on the hidden confounder by using the alternate SCM $\mathcal{M}^{\prime}$ to distinguish between different optimal LP solutions which is required by property 2 (since otherwise, there would be no observed difference, alas no attack). Finally, we can provide our key result, that we have encountered previously in informal terms, in its complete formal version.
\setcounter{theorem}{0}
\begin{theorem}[\textbf{HCAs on LPs}{\normalfont, formal}] \label{thm:confhca}
Let $\phi_{\mathcal{M}}$ be an \textbf{integral} LP-parameterization based on SCM $\mathcal{M}$, then we have:
\begin{equation*}
    \mathcal{M} \ \textnormal{is causally insufficient} \iff \phi_{\mathcal{M}} \ \textnormal{is prone to HCA}.
\end{equation*}
\end{theorem}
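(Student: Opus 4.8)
The plan is to prove the two implications separately, with the backward direction being essentially definitional and the forward direction carrying all of the technical weight. For ($\Leftarrow$), suppose $\phi_{\mathcal{M}}$ is prone to HCA. By Def.\ref{def:hca} the witnessing function $h$ factors through a projection $\mathcal{M}^{\prime}_C$ whose codomain is $\text{Val}(C)$ for a variable $C$ that, by the closing clause of that same definition, \emph{is} a hidden confounder of $\mathcal{M}$. The mere existence of such a $C$ contradicts causal sufficiency, so $\mathcal{M}$ must be causally insufficient; no further argument is needed here.

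For the substantive direction ($\Rightarrow$), assume $\mathcal{M}$ is causally insufficient, so it admits a hidden confounder which I will call $C$. First I would promote $C$ to an explicit endogenous variable, obtaining an alternate SCM $\mathcal{M}^{\prime}$ that contains $C$ and agrees with $\mathcal{M}$ on the observational marginal over the modelled variables; this supplies exactly the projection $\mathcal{M}^{\prime}_C$ demanded by Def.\ref{def:hca}. Because $\phi_{\mathcal{M}}$ is integral in the sense of Def.\ref{def:integral}, every unit $i$ active in an optimal solution $\mathbf{x}^*$ is recoverable from its cost-vector slice and therefore carries a well-defined confounder value $\mathcal{M}^{\prime}_C(i)$; for the concrete families LA and SP this is precisely the unit-to-data-point correspondence established in Cor.\ref{cor:integral}.

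The heart of the argument is to exhibit an injective $h$ of the constrained form required by property 1. The plan is to choose $\mathcal{M}^{\prime}$ so that $C$ is continuously distributed, whence the realized values $\mathcal{M}^{\prime}_C(i)$ are almost surely pairwise distinct across units, and to take $\bigoplus$ to be a multiset (concatenation) aggregation rather than a plain sum. Then the multiset of values $\mathcal{M}^{\prime}_C(i)$ for $i\in\mathbf{x}^*$ determines exactly which units are active, hence determines $\mathbf{x}^*$; composing with any injection $f$ from finite multisets of reals into $\mathbb{R}$ yields an injective $h$ satisfying property 1 by construction. With $h$ injective in hand, property 2 is immediate: Assumption \ref{ass:epsball} supplies a cost vector $\mathbf{\hat{w}}\in B^{\mathbf{w}}_{\epsilon}$ admitting multiple optima, Assumption \ref{ass:resolution} guarantees the solver returns $\mathbf{x}^*(\mathbf{\hat{w}})\neq\mathbf{x}^*(\mathbf{w})$, and injectivity then forces $h(\mathbf{x}^*(\mathbf{w}))\neq h(\mathbf{x}^*(\mathbf{\hat{w}}))$.

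The step I expect to be most delicate is guaranteeing injectivity of $h$ \emph{while} respecting the structural constraint of property 1, namely that $h$ must factor through the aggregated confounder values and may not read off $\mathbf{x}^*$ directly. A naive choice such as $\bigoplus=\sum$ can collapse distinct selections (for instance $c_1+c_4=c_2+c_3$), so the argument hinges on choosing the aggregation and the confounder distribution so that no such collapse occurs; the continuous-$C$ plus multiset-aggregation route above is what I would use to rule this out, falling back on genericity (distinct values almost surely, together with finiteness of the vertex set of optima) should a sum-based aggregation be insisted upon. A secondary point to verify is that the promoted model $\mathcal{M}^{\prime}$ is a legitimate alternate SCM of $\mathcal{M}$, i.e.\ that turning the hidden confounder into an endogenous variable leaves the observable distribution consumed by $\phi_{\mathcal{M}}$ unchanged, so that $C$ genuinely qualifies as a hidden confounder of $\mathcal{M}$ in the sense of Def.\ref{def:hca}.
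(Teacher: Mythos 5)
Your proposal is correct and follows essentially the same route as the paper's own proof: the ``$\impliedby$'' direction is dispatched as definitional, and the ``$\implies$'' direction promotes the hidden confounder $C$ into an alternate SCM $\mathcal{M}^{\prime}$, invokes integrality to obtain an $h$ of the form required by property 1 of Def.\ref{def:hca}, and combines Assumption \ref{ass:epsball} and Assumption \ref{ass:resolution} with injectivity of $h$ to secure property 2. The one point where you go beyond the paper is the injectivity of $h$: the paper simply asserts that integrality guarantees an injective $h$ of the required form, whereas you correctly observe that a sum-based aggregation can collapse distinct solutions (e.g.\ $c_1+c_4=c_2+c_3$) and repair this via a multiset aggregation or a genericity argument --- a subtlety the paper's proof (and indeed its own example $h(\mathbf{x}^*)=\sum_{i\in\mathbf{x}^*}\mathcal{M}^*_W(i)$ in Sec.\ref{sec:empiricssect}) leaves unaddressed.
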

\begin{proof}
``${\implies}$": As discussed in Sec.\ref{sec:bg}, for an SCM $\mathcal{M}=\inner{\mathbf{U},\mathbf{V},\mathcal{F},P(\mathbf{U})}$ to be causally insufficient there must exist at least one hidden confounder, denoted $C$, that is not an endogenous variable, $C\not\in\mathbf{V}$. Therefore, for any LP-parameterization $\phi_{\mathcal{M}}$ and any LP cost vector $\mathbf{w}$, the latter also doesn't depend on $C$. Then, take an alternate $\mathcal{M}^{\prime}=\inner{\mathbf{U}^{\prime},\mathbf{V}^{\prime},\mathcal{F}^{\prime},P^{\prime}(\mathbf{U})}$ for which $C\in\mathbf{V}^{\prime}$ and construct $h$ as described by property 1 from Def.\ref{def:hca}, which is guaranteed to exist since $\phi_{\mathcal{M}}$ is integral. Through Assumption \ref{ass:epsball} we have guaranteed multiple optimal LP solutions for $\textnormal{LP}(\mathbf{x}; \mathbf{w},\mathbf{A},\mathbf{b})$ to choose from. On the other hand, through Assumption \ref{ass:resolution} we can perturb said LP cost vector $\mathbf{w}$ such that $\mathbf{x}^*(\mathbf{w})\neq \mathbf{x}^*(\mathbf{\hat{w}})$. Since $h$ is injective, we have that $h(\mathbf{x}^*(\mathbf{w}))\neq h(\mathbf{x}^*(\mathbf{\hat{w}}))$ which is property 2 of Def.\ref{def:hca} completing the implication.\\
``${\impliedby}$": Trivial, since HCA (Def.\ref{def:hca}) are defined as attacks that exploit hidden confounders.
\end{proof}
\noindent This fundamental Theorem of our formalism on HCA guarantees us that the existence of hidden confounders implies the susceptibility of LPs to HCAs. In fact, we can even \emph{construct an uncountable number} of HCAs based on said confounders. We further argue that the HCAs that follow from Thm.\ref{thm:confhca} are highly non-trivial in the sense that they exploit information ``outside of the data'' i.e., assuming that the human modeller only uses a causally insufficient $\phi_{\mathcal{M}}$, then an adversary is guaranteed a chance to exploit his/her better knowledge on the true, underlying SCM to perform an attack. Also, a simple corollary of Thm.\ref{thm:confhca} is that LA and SP are \emph{always} prone to HCAs since they have integral $\phi_{\mathcal{M}}$ and \Ra{the odds are stacked against the model under consideration $\mathcal{M}$ actually being the true, underlying SCM $\mathcal{M}^*$ (that is, in most practical cases we will encounter the situation where $\mathcal{M}\neq\mathcal{M}^*$)}. Another way of looking at HCA is possibly even more intriguing: since we need to take care of modelling assumptions to prevent HCA, the modelling of LPs ultimately becomes a \emph{causal problem} since causality is mainly concerned with the discussion of modelling assumptions (usually for identifiability of causal quantities, whereas in this case for the robustness of an LP towards HCAs).

\section{Empirical Evidence}
This section is purely dedicated to discussing the existence of relevant HCA apart from the example that we have already discussed with the LA problem of assigning people to free vaccination spots. We first cover a SP for travelling between cities and then also a general LP for a real world inspired model of an energy system.

\begin{figure*}[!t]
    \centering
    \includegraphics[width=\textwidth]{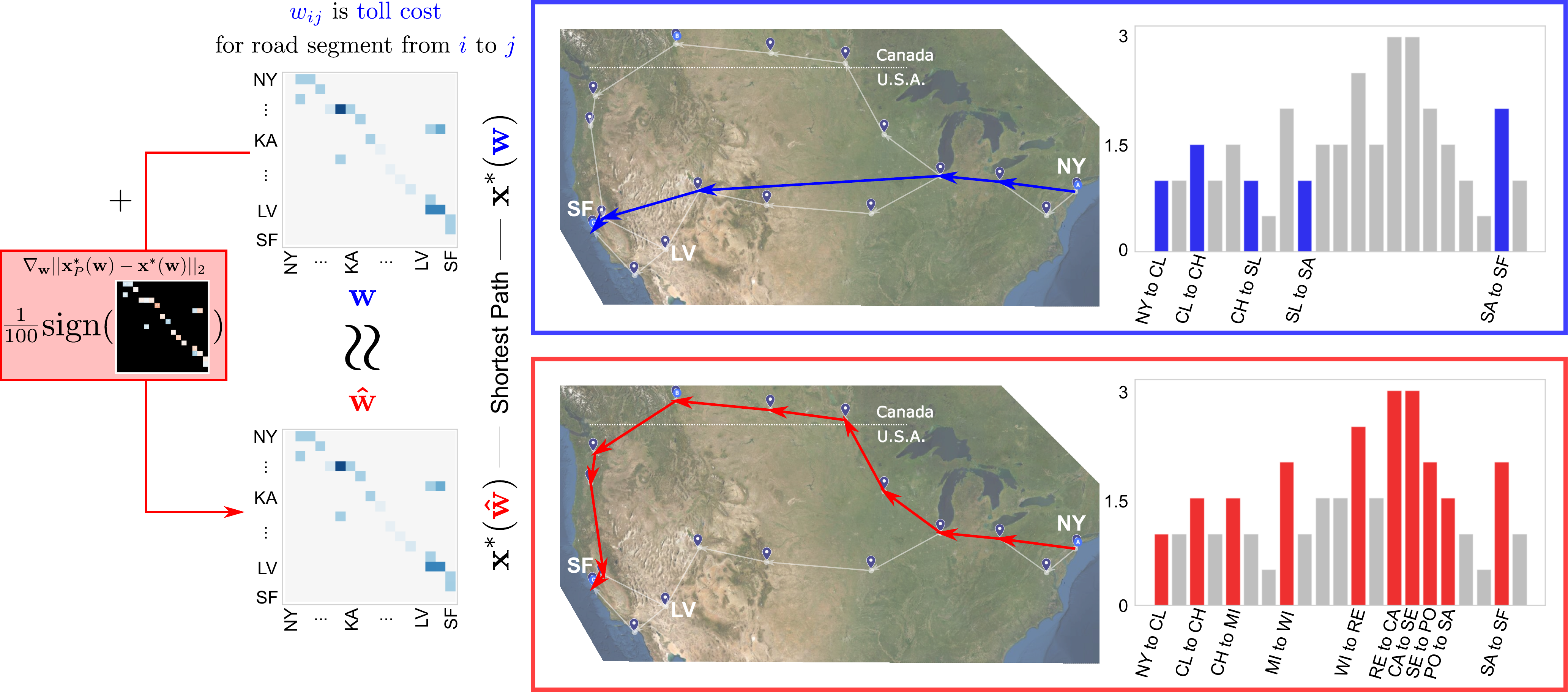}
    \caption{\Ra{\textbf{Another Example: Increased $CO_2$ Emissions.} 
    The edges in the graph represent tolls to be paid for travelling a given road segment. The hidden confounder are $CO_2$ emissions. We visualize the results of performing a HCA using said hidden confounder on our SP-LP instance. Our HCA reveals that travelling via Canada instead of mid-US will amount to the same total travel toll to be paid but the $CO_2$ emissions drastically deviate between the solutions. (Best viewed in Color.)
    }}
    \label{fig:NY-to-SF}
\end{figure*}

\subsection{Shortest Path LP}
The caption of this example might be the following captivating headline for a newspaper article: ``Travelling from New York City to San Francisco...\emph{via Canada}?''. Surely, when travelling between NYC and SF, one would not take the detour over Canada, but this is exactly what occurs in our example in which we perform a HCA on a corresponding LP. Like before with our HCA on the LA-LP with the vaccine example, we depict the application of the HCA visually within a figure. This new example is being showcased in Fig.\ref{fig:NY-to-SF} which is concerned with a classical Shortest Path (SP) problem and we discuss the details in the following. In a corresponding real world setting for the example, we might consider the development of an autonomous car (since arguably any conscious driver would surely notice passing border patrol when actually travelling such a route) to argue for realism. We let the developmental autonomous car travel within North America from New York City (NY) to San Francisco (SF). Our SP has the intention of reducing overall toll costs for the optimal route. Therefore, the SP-LP is not concerned with the shortest route in terms of actual travelling distance but in terms of actual toll costs accumulated (noting this distinction is important). From experience, it is known that these toll costs can be hefty. Our LP cost $w_{ij} \in \mathbb{R}_{>0}$ represents the toll cost when travelling on any road segment from $i$ to $j$. In this example, we know the toll costs for a relevant set of road segments within North America where the Canadian road toll policy is comparably modest when compared to the one in the US. Our LP model subsequently solves any given SP problem instance, fully parameterized by the directed acyclic graph (DAG) $\mathbf{w}\in\mathbb{R}^{n\times n}$ with $n$ being the total number of different cities we have specified, returning $\mathbf{x}_{US}:=\mathbf{x}^*(\mathbf{w})\in[0,1]^{n\times n}$ suggesting a route through the mid-US. We change this result by constructing a HCA. The HCA leads to a minimal perturbation of the original DAG (representing the toll costs), that is $\mathbf{\hat{w}}\approx \mathbf{w}$, but our solver now chooses an alternate solution $\mathbf{x}_{CA}:=\mathbf{x}^*(\mathbf{\hat{w}})$ suggesting a route across the border via Canada.\footnote{Remember, it is essential for the technical Assumption \ref{ass:epsball} and \ref{ass:resolution}, discussed in the previous section, to hold.} While evidently the alternate route deviates strongly in terms of selected road segments, mathematically $SHD(\mathbf{x}_{US}, \mathbf{x}_{CA}) \gg 0$ where $SHD(\cdot, \cdot)\in\mathbb{N}$ is the Structural Hamming Distance, our model is in fact trustfully returning the optimal solution. In other words, cost-wise the statement $\mathbf{w\tran} \mathbf{x}_{US}\approx \mathbf{\hat{w}\tran} \mathbf{x}_{CA}$ holds. Nonetheless, the aforementioned deviation in terms of the resulting binary codes lends itself to a severe consequence in terms of the hidden confounder i.e., with respect to $CO_2$ emissions. Like in the main text, we construct an HCA with function $h$ accordingly. The hidden confounder is being exploited by the adversary, the alternate optimal solution performs significantly worse: $h(\mathbf{x}_{CA}) \gg h(\mathbf{x}_{US})$. In words, both solutions require approximately the same toll costs and are therefore deemed equivalent in that regard but in terms of $CO_2$ emmissions, the (distance-wise) longer route via Canada is far worse for the environment. By this, we have again provided existential proof that a hidden confounder can more generally define adversarial attacks for mathematical programs beyond the original formulation in the classical setting for classification (and deep networks), making the attack a consequence of not the specific methodology being applied to the problem but problem specification itself.

\begin{table*}[t!]
\begin{align*}
    \min_{Cap, p} \quad &c_{PV} \times Cap_{PV} + c_{Bat}\times Cap^S_{Bat} + \sum_t c_{Ele}\times p_{Ele}(t) + \sum_t c_{Gas}\times p_{Gas}(t) \\
    s.t. \quad & p_{Ele}(t) + p_{PV}(t) + p^{out}_{Bat}(t) - p^{in}_{Bat}(t) + p_{Gas}(t) = D(t), \forall t \quad 0 \leq p_{Ele} \\
    & p^S_{Bat}(t) = p^S_{Bat}(t-1) + p^{out}_{Bat}(t) - p^{in}_{Bat}(t), t\in 2,\dots,T\\
    & 0 \leq p_{PV}(t) \leq Cap_{PV}\times avail_{PV}(t) \times \delta t, \forall t \\
    & 0 \leq p^{in}_{Bat}(t), p^{out}_{Bat}(t) \leq Cap_{Bat}, \forall t\\
    & 0 \leq p_{Gas}(t) \leq U_{Gas}, \forall t\\
    & p^{S}_{Bat}(0) = 0
\end{align*} 
\caption{\textbf{Real-world Optimization Modelling Example: 1-year Energy Systems LP for an Average Household.} A large LP that unrolls for 8760 time steps (8760 hours = 1 year). Model based on \citep{schaber2012transmission}, the quantities represent: Cost for Photovoltaics $c_{PV}$ (€/kW), Battery $c_{Bat}$ (€/kWh), Market Electricity $c_{Ele}$ (€/kWh), Gas $c_{Gas}$ (€/kWh), and the total Demand $D$ (kWh/Year).}
\label{tab:energylpformula}
\end{table*}\vspace{.25cm}
\begin{table}[t]
    \centering
    \scalebox{.95}{
\begin{tabular}{*{8}{|c}||c|}\hline
Dem.~($h$) & $Cap_{PV}$ & $Cap_{Bat}$ & Self-Gen.\ & TOTEX & CAPEX & $Con_{Gas}$ & $Con_{Ele}$ & $w_{PV}$\\
	\hline
	\hline
	3000 & 1.76 & 2.45 & 0.42 & 597.41 & 161.64 & 1.70 & 1743.06 & .005\\
	\hline
	3000 & \textcolor{green}{7.15} & 4.78 & 0.66 & 468.24 & 214.87 & 1.95 & \textcolor{red}{1013.49} & .001\\
	\hline
\end{tabular}}
\vspace{.25cm}
  	\caption{\textbf{Dominating Technologies}. We perform an approximate HCA to reveal a new solution that highlights the fact that PV end up as a new ``dominating technology.'' Price perturbations in ($w_{PV}$) have boosted PV production $Cap_{PV}$ (green), which then again can be argued leads to a significant increase in risk of fire or injuries for the workers installing the panels.
    \label{table:price}}
    \vspace{-.5cm}
\end{table}

\subsection{Energy-Systems LP} \label{subsec:energy}
In this final empirical simulation, we consider a \emph{large scale} LP. Furthermore, this LP is a general LP, so neither LA nor SP and thereby it does not satisfy \emph{integrality}. Our real world based example considers an energy model for modelling the energy portfolio of a single-family house based on actual real world data for demand and commonly used equations from energy systems research \citep{schaber2012transmission} to model the evolution of respective quantities. The examined model considers concepts such as photovoltaics (PV), market electricty and heating gas over a year time frame (in hours) and resembles a simplified version of the TIMES model \cite{loulou2005documentation}. The optimal solution balances the usage of the different technologies for matching the required demand of the family household such that overall cost is being minimized. The specific LP template is given in Tab.\ref{tab:energylpformula}. Note the summation and functional dependencies on $t \in \{0,\dots, 8760\}$ with 1 year = 8760 hours rendering the template a \emph{very large single LP modelling each hour of the year} with well over 40,000 constraints and an objective function with over 17,000 terms. Still, there are some technologies like for instance PV, in their capacity ($Cap_{PV}$), that do not depend on $t$ which would correspond to the real world intuition that one does not decide and subsequently build new PV for any given hour as it poses a single investment fixed in time. Since in this example we cannot specify the underlying SCM to then also identify confounders, as previously done for the LA- and SP-LPs, we need to treat this LP instance differently. We use heuristics for the SCM-part to produce an approximation to HCA to overcome the fact that we are not provided with a reasonable SCM a-priori. We then perform said approximation to produce an HCA that creates the results presented in Tab.\ref{table:price}. We observe the effect which energy-systems researchers call ``dominating technologies,'' where PV is being preferred over market-bought electricity. While we do not have a function $h$ this time to evaluate the difference in solutions for the adversary, we can still make the argument that building this many PV modules comes at an increased risk of fire (which could be considered the hidden confounder in this case). Another possible interpretation would be risk of working injury for the panel installing workers, since installing the panels usually happens at the upper level height of the house. To conclude this example with an important discussion, we want to mention that the limitations on PV-production and Market-buy of electricity act as discrepancy counter-measures that require the system to balance out different technologies i.e., while there will still be dominating technologies under price advantages the maximum skew of the portfolio is naturally being protected from being too drastic as both PV and bought electricity are limited in their `availability' (e.g.~solar exposure, roof capacity, law regulations etc.) and thus cannot be naïvely maximized. In other words, we observe that this LP behaves qualitatively a little different when compared to the LA- and SP-LP examples in the sense that this energy system LP is more `balanced' and thus somehow less prone to HCA. The aforementioned lack of integrality might be one of the reasons, but there is reason to believe based on the previous argument of the \emph{dynamics of the competing technologies} that this might be the main cause. A precise formalization of these aspects is a remaining open problem.

\subsection{\Rb{Synthetic Simulations, Scalability and Key Assumptions}}
\Rb{We've conducted several experiments of synthetic nature on LP problems. These LP problems included well-known integer problems such as matching or shortest path, and further general LP formulations as in the case of energy-systems. While the LP problems themselves are considerably data-independent, the presented HCA formalism relies on SCM as data-generating processes, therefore, sample or data set sizes in the LP parameterizations $\phi_{\mathcal{M}}$ can vary. Unsurprisingly, since Def.\ref{def:phi} puts a constraint on the data set under consideration there will be no downstream influence by the data set size onto the HCA. That is, since hidden confounders are a property of an SCM (on the model level) they do not have an influence on the data set we consider (on the sample level). Similarly, our simulations corroborate on the invariance (or rather orthogonality) of HCA to variable scales as this is a general property of the LPs under consideration e.g., the scale of a variable in the SP setting that denotes ``road segment will be taken, yes/no'' can not be changed. In an analogue argument for the general LP, the variables having well-specified scales is a defining (thus necessary) property e.g.\ the quantities in the energy-system example need to satisfy given physical conservation laws. }

\Rb{Regarding the scalability of HCA, it can be added that the knowledge on a confounder and subsequent $h$ is always $\mathcal{O}(1)$, that is, independent of the size of any given SCM $\mathcal{M}$, if that $\mathcal{M}$ is causally insufficient, then a single hidden confounder is sufficient for constructing a $h$ that acts as HCA. Other than the causal aspect, regular scaling properties of DPOs apply to the construction of HCA. Concerning the usage of different LP solvers for obtaining optimal solutions $\mathbf{x}^*$: the presented approach crucially builds upon DPOs that are characterized by their differentiability. Differentiability being a crucial property for adversarial examples, makes it a necessary condition for HCA as well, as they can be viewed as an extension to LP adversarial-style examples. In conclusion, only differentiable LP solvers can be employed, and to the best of our knowledge no other solvers, apart from the ones employed here, have been studied thus far. Regarding the assumptions, there are more nuanced views to be considered for justification purposes. The existence of an underlying SCM is the foundational assumption in Pearlian causality and can be taken as granted. Similarly, the fact that the underlying SCM is only being approximated by any given model of the data. Therefore, knowledge on hidden confounders is a strong assumption. In our case, this assumption is relaxed since only a \emph{single} hidden confounder need be known. Regarding the two technical assumptions \ref{ass:epsball} and \ref{ass:resolution} in this work, the latter is concerned with a `tie-break' resolution, that is, if there exist multiple solution to a given LP parameterization, then a certain permutation will always favour a certain solution. This is a straightforward assumption since the parameterization can be altered arbitrarily by an $\epsilon$ change. However, the prior assumption which is concerned with the \emph{existence} of multiple solutions within an $\epsilon$-ball is more demanding than in the standard adversarial case since LPs are polytopes of potentially complex shape. Nonetheless, on a conceptual level, the presented showcases for LA/SP demonstrate that various situations of multiple-solution sets for LPs do occur in practice.}


\section{Conclusive Discussion}
Ultimately, we believe HCAs to be a fundamental problem of mathematical optimization---to the same extent that hidden confounding is a fundamental problem of the Pearlian notion of causality (or science in general). It is intriguing that the formal tools from causality allowed for bridging the gap between classical adversarial attacks from deep learning and the first basic class of mathematical optimization namely LPs. While it is arguably of great scientific value to purely study the existence and properties of HCA, naturally, the question arises on the severity of HCA for the real world. We believe that our examples have shown potentially worrisome real-world implications. Especially our lead example in Fig.\ref{fig:leadex} captures the \emph{Zeitgeist} of the pandemic times with the rise of Covid (that hopeful has found an end finally). To thus ask the inverse question on how to defend against HCA is equally important, yet, we believe this question to be ill-posed to begin with. \emph{Essentially, Thm.\ref{thm:confhca} suggests an equivalence on the existence of hidden confounders and such attacks.} Put differently, as long as our model assumptions are imperfect, we are exploitable---again, giving us reason to believe HCA to be of fundamental nature. However, that does not mean that we are doomed to accept that HCA will always be something that can happen anywhere but rather take the opportunity to further study HCA beyond LPs (as initially conjectured, see Conj.\ref{con:mps}) but also alternate notions of attacks, in order to really understand what our assumptions cover and what not. With HCA we have presented one new way of thinking of adversarial style attacks and its serves as a representative example of what we mean by studying these phenomena. While our perspective put causality to good, there might exist other notions of attacks similar to HCA that might in fact not be based on causal knowledge (confounders) to begin with. Since we were able to develop HCA from first principles, by starting from classical adversarial attacks and naïve mappings to LPs, we have good reason to believe in the actual existence of related families of attacks. From a theoretical standpoint the question of whether the integral property (Def.\ref{def:integral}) applied to Thm.\ref{thm:confhca} could be dropped might be interesting for broadening the applicability of HCA, as we saw with our energy systems example where we still achieved some sort of reasonable HCA approximation although integrality did not hold.

On another note, we observe this work to form a triangular relationship to the works of \citep{ilse2021selecting} and \citep{eghbal2020data}. To elaborate: the first publication is concerned, again, with Pearlian causality and sees how it relates to data augmentation, while the second paper bridges augmentation and adversarials---our work can be seen as the missing link that then loops back adversarials to causality. From that perspective, we can clearly see that there seems to be an overarching research theme yet to be uncovered. Separating our discussion from HCA for the moment, we nonetheless are tempted to believe that HCA (although being the focus and motivation behind this work) are \emph{not} the most important discovery in this paper. The concept of LP-parameterization based on SCM (Def.\ref{def:phi}) is an intriguing and original concept that for the first time connects the two seemingly independent notions of causality and mathematical optimization in a non-trivial manner. So, it might turn out to be more fruitful long-term to actually study the properties of mathematical programs which stand in direct relation to the data-generating capabilities of SCM since that might lead to interesting concepts such as the estimation of \emph{causal effects} in, say, constraints of mathematical programs. To put it bluntly, our current thinking imagines future research around LP-SCM relations as even more fundamental than HCA, but ideally both HCA and related attacks should be further studied alongside LP-SCM relations since both have real world implications. Arguably, for causality the latter is more important, whereas for ML the former is more important.

\paragraph{Takeaway and Societal Implications}
Our work seems to suggest (1) that we can have a similar, adversarial phenomenon outside classification and (2) that the current state of ML can be viewed `causal' to the extent that the assumptions are `causal' i.e., $\phi$ summarizes essentially these causal assumptions. Of concern are mostly (a) raising awareness on the issue that adversaries could in fact use HCA as shown in the examples to produce harm and (b) provide incentives for studying the integration of ML with causality since modelling assumptions seem to lie at the core of it. 

\section*{Declarations}
\noindent\textbf{Funding.} This work was supported by the ICT-48 Network of AI Research Excellence Center ``TAILOR'' (EU Horizon 2020, GA No 952215), the Nexplore Collaboration Lab ``AI in Construction'' (AICO) and by the Federal Ministry of Education and Research (BMBF; project ``PlexPlain'', FKZ 01IS19081). It benefited from the Hessian research priority programme LOEWE within the project WhiteBox and the HMWK cluster project ``The Third Wave of AI'' (3AI). The authors thank Jonas Hülsmann and Florian Steinke for providing the LP model for the energy system example. We further thank the anonymous reviewers for their valuable feedback and constructive criticism.

\noindent\textbf{Conflicts of interest.} The authors have no competing interests to declare.

\noindent\textbf{Ethics approval.} Ethics approval was not required for this research.

\noindent\textbf{Consent to participate. / Consent for publication.} Not applicable.

\noindent\textbf{Availability of data and material / Code availability.} We make our code publicly available at \url{https://github.com/zecevic-matej/Hidden-Confounder-Attacks}

\noindent\textbf{Authors' contributions.} Matej Ze{\v{c}}evi{\'c} (MZ), Devendra Singh Dhami (DSD), Kristian Kersting (KK) contributed to conception of the idea. MZ performed the data preparation and experimental analysis. MZ wrote the first draft of the manuscript. DSD and KK wrote sections of the manuscript. All authors contributed to manuscript revision, read, and approved the submitted version.




\begin{appendices}
\vspace{.25cm}
\noindent\textit{\textbf{Reproduction Details.}} \textbf{LA-LP:} For the LA example, with the vaccination matching bias towards the wealthy, we use $N=15$ sampling iterations with temperature parameter $\sigma=0.5$ for the perturbation in the DPO as defined by \cite{berthet2020learning} and an attack step $\epsilon=0.01$ for the final HCA. \textbf{SP-LP:} For the SP example, travelling from NY to SF via Canada, we use more sampling iterations ($N=20$) using a lower temperature ($\sigma=0.25$). \textbf{Real World LP:} The energy systems LP has following parameter specifications:
\begin{wraptable}{r}{5.5cm}
	\centering
	\small
	\begin{tabular}{|c|c|c|c|c|}
		\hline
		$c_{PV}$ & $c_{Bat}$ & $c_{Ele}$ & $D$ & $c_{Gas}$ \\
		\hline
		\hline
		0.005 & 300 & 0.25 & 3000 & 0.25\\
		\hline
		0.001 & 300 & 0.25 & 3000 & 0.25\\
		\hline
	\end{tabular}
	\caption{\textbf{Parameterization Energy-System}. Cost for Photovoltaics $c_{PV}$ (€/kW), Battery $c_{Bat}$ (€/kWh), Market Electricity $c_{Ele}$ (€/kWh), Gas $c_{Gas}$ (€/kWh), and the total Demand $D$ (kWh/Year).
	}\label{table:details-energy}
\end{wraptable}

\noindent All experiments are being performed on a MacBook Pro (13-inch, 2020, Four Thunderbolt 3 ports) laptop running a 2,3 GHz Quad-Core Intel Core i7 CPU with a 16 GB 3733 MHz LPDDR4X RAM on time scales ranging from a few minutes (e.g.\ evaluating LA/SP examples) up to a few hours (e.g.\ energy systems real world example). Code link in Sec.\ref{sec1}.

\end{appendices}

\bibliography{references.bib}

\begin{thebibliography}{}
\renewcommand{\doi}[1]{\url{https://doi.org/#1}}
\bibcommenthead

\bibitem [\protect \citeauthoryear {%
Abernethy%
, Lee%
, Sinha%
\BCBL {}\ \BBA {} Tewari%
}{%
Abernethy%
\ \protect \BOthers {.}}{%
{\protect \APACyear {2014}}%
}]{%
abernethy2014online}
\APACinsertmetastar {%
abernethy2014online}%
\begin{APACrefauthors}%
Abernethy, J.%
, Lee, C.%
, Sinha, A.%
\BCBL {} Tewari, A.%
\end{APACrefauthors}%
\unskip\
\newblock
\APACrefYearMonthDay{2014}{}{}.
\newblock
{\BBOQ}\APACrefatitle {Online linear optimization via smoothing} {Online linear
  optimization via smoothing}.{\BBCQ}
\newblock
 \APACrefbtitle {COLT.} {Colt.}
\PrintBackRefs{\CurrentBib}

\bibitem [\protect \citeauthoryear {%
Agrawal%
\ \protect \BOthers {.}}{%
Agrawal%
\ \protect \BOthers {.}}{%
{\protect \APACyear {2019}}%
}]{%
agrawal2019differentiable}
\APACinsertmetastar {%
agrawal2019differentiable}%
\begin{APACrefauthors}%
Agrawal, A.%
, Amos, B.%
, Barratt, S.%
, Boyd, S.%
, Diamond, S.%
\BCBL {} Kolter, Z.%
\end{APACrefauthors}%
\unskip\
\newblock
\APACrefYearMonthDay{2019}{}{}.
\newblock
{\BBOQ}\APACrefatitle {Differentiable convex optimization layers}
  {Differentiable convex optimization layers}.{\BBCQ}
\newblock
 \APACrefbtitle {Neural Information Processing Systems.} {Neural information
  processing systems.}
\PrintBackRefs{\CurrentBib}

\bibitem [\protect \citeauthoryear {%
Amos%
\ \BBA {} Kolter%
}{%
Amos%
\ \BBA {} Kolter%
}{%
{\protect \APACyear {2017}}%
}]{%
amos2017optnet}
\APACinsertmetastar {%
amos2017optnet}%
\begin{APACrefauthors}%
Amos, B.%
\BCBT {}\ \BBA {} Kolter, J.Z.%
\end{APACrefauthors}%
\unskip\
\newblock
\APACrefYearMonthDay{2017}{}{}.
\newblock
{\BBOQ}\APACrefatitle {Optnet: Differentiable optimization as a layer in neural
  networks} {Optnet: Differentiable optimization as a layer in neural
  networks}.{\BBCQ}
\newblock
 \APACrefbtitle {International Conference on Machine Learning.} {International
  conference on machine learning.}
\PrintBackRefs{\CurrentBib}

\bibitem [\protect \citeauthoryear {%
Bach%
}{%
Bach%
}{%
{\protect \APACyear {2013}}%
}]{%
bach2011learning}
\APACinsertmetastar {%
bach2011learning}%
\begin{APACrefauthors}%
Bach, F.%
\end{APACrefauthors}%
\unskip\
\newblock
\APACrefYearMonthDay{2013}{}{}.
\newblock
{\BBOQ}\APACrefatitle {Learning with submodular functions: A convex
  optimization perspective} {Learning with submodular functions: A convex
  optimization perspective}.{\BBCQ}
\newblock
 \APACrefbtitle {Foundations and Trends in Machine Learning.} {Foundations and
  trends in machine learning.}
\PrintBackRefs{\CurrentBib}

\bibitem [\protect \citeauthoryear {%
Berthet%
\ \protect \BOthers {.}}{%
Berthet%
\ \protect \BOthers {.}}{%
{\protect \APACyear {2020}}%
}]{%
berthet2020learning}
\APACinsertmetastar {%
berthet2020learning}%
\begin{APACrefauthors}%
Berthet, Q.%
, Blondel, M.%
, Teboul, O.%
, Cuturi, M.%
, Vert, J\BHBI P.%
\BCBL {} Bach, F.%
\end{APACrefauthors}%
\unskip\
\newblock
\APACrefYearMonthDay{2020}{}{}.
\newblock
{\BBOQ}\APACrefatitle {Learning with differentiable perturbed optimizers}
  {Learning with differentiable perturbed optimizers}.{\BBCQ}
\newblock
 \APACrefbtitle {Neural Information Processing Systems.} {Neural information
  processing systems.}
\PrintBackRefs{\CurrentBib}

\bibitem [\protect \citeauthoryear {%
Bica%
, Alaa%
\BCBL {}\ \BBA {} Van Der~Schaar%
}{%
Bica%
\ \protect \BOthers {.}}{%
{\protect \APACyear {2020}}%
}]{%
bica2020time}
\APACinsertmetastar {%
bica2020time}%
\begin{APACrefauthors}%
Bica, I.%
, Alaa, A.%
\BCBL {} Van Der~Schaar, M.%
\end{APACrefauthors}%
\unskip\
\newblock
\APACrefYearMonthDay{2020}{}{}.
\newblock
{\BBOQ}\APACrefatitle {Time series deconfounder: Estimating treatment effects
  \dots} {Time series deconfounder: Estimating treatment effects \dots}.{\BBCQ}
\newblock
 \APACrefbtitle {International Conference on Machine Learning.} {International
  conference on machine learning.}
\PrintBackRefs{\CurrentBib}

\bibitem [\protect \citeauthoryear {%
Brendel%
, Rauber%
\BCBL {}\ \BBA {} Bethge%
}{%
Brendel%
\ \protect \BOthers {.}}{%
{\protect \APACyear {2018}}%
}]{%
brendel2017decision}
\APACinsertmetastar {%
brendel2017decision}%
\begin{APACrefauthors}%
Brendel, W.%
, Rauber, J.%
\BCBL {} Bethge, M.%
\end{APACrefauthors}%
\unskip\
\newblock
\APACrefYearMonthDay{2018}{}{}.
\newblock
{\BBOQ}\APACrefatitle {Decision-based adversarial attacks: Reliable attacks
  \dots} {Decision-based adversarial attacks: Reliable attacks \dots}.{\BBCQ}
\newblock
 \APACrefbtitle {International Conference on Learning Representations.}
  {International conference on learning representations.}
\PrintBackRefs{\CurrentBib}

\bibitem [\protect \citeauthoryear {%
Chen%
, Cornelius%
, Martin%
\BCBL {}\ \BBA {} Chau%
}{%
Chen%
\ \protect \BOthers {.}}{%
{\protect \APACyear {2018}}%
}]{%
chen2018shapeshifter}
\APACinsertmetastar {%
chen2018shapeshifter}%
\begin{APACrefauthors}%
Chen, S\BHBI T.%
, Cornelius, C.%
, Martin, J.%
\BCBL {} Chau, D.H.P.%
\end{APACrefauthors}%
\unskip\
\newblock
\APACrefYearMonthDay{2018}{}{}.
\newblock
{\BBOQ}\APACrefatitle {Shapeshifter: Robust physical adversarial attack on
  faster r-cnn object detector} {Shapeshifter: Robust physical adversarial
  attack on faster r-cnn object detector}.{\BBCQ}
\newblock
 \APACrefbtitle {ECML-PKDD.} {Ecml-pkdd.}
\PrintBackRefs{\CurrentBib}

\bibitem [\protect \citeauthoryear {%
Eghbal-zadeh%
\ \protect \BOthers {.}}{%
Eghbal-zadeh%
\ \protect \BOthers {.}}{%
{\protect \APACyear {2020}}%
}]{%
eghbal2020data}
\APACinsertmetastar {%
eghbal2020data}%
\begin{APACrefauthors}%
Eghbal-zadeh, H.%
, Koutini, K.%
, Primus, P.%
, Haunschmid, V.%
, Lewandowski, M.%
, Zellinger, W.%
\BDBL {}Widmer, G.%
\end{APACrefauthors}%
\unskip\
\newblock
\APACrefYearMonthDay{2020}{}{}.
\newblock
{\BBOQ}\APACrefatitle {On Data Augmentation and Adversarial Risk: An Empirical
  Analysis} {On data augmentation and adversarial risk: An empirical
  analysis}.{\BBCQ}
\newblock
 \APACrefbtitle {arXiv preprint arXiv:2007.02650.} {arxiv preprint
  arxiv:2007.02650.}
\PrintBackRefs{\CurrentBib}

\bibitem [\protect \citeauthoryear {%
Goodfellow%
, Shlens%
\BCBL {}\ \BBA {} Szegedy%
}{%
Goodfellow%
\ \protect \BOthers {.}}{%
{\protect \APACyear {2015}}%
}]{%
goodfellow2014explaining}
\APACinsertmetastar {%
goodfellow2014explaining}%
\begin{APACrefauthors}%
Goodfellow, I.J.%
, Shlens, J.%
\BCBL {} Szegedy, C.%
\end{APACrefauthors}%
\unskip\
\newblock
\APACrefYearMonthDay{2015}{}{}.
\newblock
{\BBOQ}\APACrefatitle {Explaining and harnessing adversarial examples}
  {Explaining and harnessing adversarial examples}.{\BBCQ}
\newblock
 \APACrefbtitle {International Conference on Learning Representations.}
  {International conference on learning representations.}
\PrintBackRefs{\CurrentBib}

\bibitem [\protect \citeauthoryear {%
Gumbel%
}{%
Gumbel%
}{%
{\protect \APACyear {1954}}%
}]{%
gumbel1954statistical}
\APACinsertmetastar {%
gumbel1954statistical}%
\begin{APACrefauthors}%
Gumbel, E.J.%
\end{APACrefauthors}%
\unskip\
\newblock
\APACrefYear{1954}.
\newblock
\APACrefbtitle {Statistical theory of extreme values and some practical
  applications: a series of lectures} {Statistical theory of extreme values and
  some practical applications: a series of lectures}\ (\BVOL~33).
\newblock
\APACaddressPublisher{}{US Government Printing Office}.
\PrintBackRefs{\CurrentBib}

\bibitem [\protect \citeauthoryear {%
Guo%
, Gardner%
, You%
, Wilson%
\BCBL {}\ \BBA {} Weinberger%
}{%
Guo%
\ \protect \BOthers {.}}{%
{\protect \APACyear {2019}}%
}]{%
guo2019simple}
\APACinsertmetastar {%
guo2019simple}%
\begin{APACrefauthors}%
Guo, C.%
, Gardner, J.%
, You, Y.%
, Wilson, A.G.%
\BCBL {} Weinberger, K.%
\end{APACrefauthors}%
\unskip\
\newblock
\APACrefYearMonthDay{2019}{}{}.
\newblock
{\BBOQ}\APACrefatitle {Simple black-box adversarial attacks} {Simple black-box
  adversarial attacks}.{\BBCQ}
\newblock
 \APACrefbtitle {International Conference on Machine Learning.} {International
  conference on machine learning.}
\PrintBackRefs{\CurrentBib}

\bibitem [\protect \citeauthoryear {%
Hair~Jr%
\ \BBA {} Sarstedt%
}{%
Hair~Jr%
\ \BBA {} Sarstedt%
}{%
{\protect \APACyear {2021}}%
}]{%
hair2021data}
\APACinsertmetastar {%
hair2021data}%
\begin{APACrefauthors}%
Hair~Jr, J.F.%
\BCBT {}\ \BBA {} Sarstedt, M.%
\end{APACrefauthors}%
\unskip\
\newblock
\APACrefYearMonthDay{2021}{}{}.
\newblock
{\BBOQ}\APACrefatitle {Data, measurement, and causal inferences in machine
  learning: opportunities and challenges for marketing} {Data, measurement, and
  causal inferences in machine learning: opportunities and challenges for
  marketing}.{\BBCQ}
\newblock
 \APACrefbtitle {Journal of Marketing Theory and Practice.} {Journal of
  marketing theory and practice.}
\PrintBackRefs{\CurrentBib}

\bibitem [\protect \citeauthoryear {%
Hamming%
}{%
Hamming%
}{%
{\protect \APACyear {1950}}%
}]{%
hamming1950error}
\APACinsertmetastar {%
hamming1950error}%
\begin{APACrefauthors}%
Hamming, R.W.%
\end{APACrefauthors}%
\unskip\
\newblock
\APACrefYearMonthDay{1950}{}{}.
\newblock
{\BBOQ}\APACrefatitle {Error detecting and error correcting codes} {Error
  detecting and error correcting codes}.{\BBCQ}
\newblock
 \APACrefbtitle {The Bell system technical booktitle.} {The bell system
  technical booktitle.}
\newblock
\APACaddressPublisher{}{Nokia Bell Labs}.
\PrintBackRefs{\CurrentBib}

\bibitem [\protect \citeauthoryear {%
Handa%
, Sharma%
\BCBL {}\ \BBA {} Shukla%
}{%
Handa%
\ \protect \BOthers {.}}{%
{\protect \APACyear {2019}}%
}]{%
handa2019machine}
\APACinsertmetastar {%
handa2019machine}%
\begin{APACrefauthors}%
Handa, A.%
, Sharma, A.%
\BCBL {} Shukla, S.K.%
\end{APACrefauthors}%
\unskip\
\newblock
\APACrefYearMonthDay{2019}{}{}.
\newblock
{\BBOQ}\APACrefatitle {Machine learning in cybersecurity: A review} {Machine
  learning in cybersecurity: A review}.{\BBCQ}
\newblock
 \APACrefbtitle {Wiley Interdisciplinary Reviews: Data Mining and Knowledge
  Discovery.} {Wiley interdisciplinary reviews: Data mining and knowledge
  discovery.}
\PrintBackRefs{\CurrentBib}

\bibitem [\protect \citeauthoryear {%
Hern{\'a}n%
\ \BBA {} Robins%
}{%
Hern{\'a}n%
\ \BBA {} Robins%
}{%
{\protect \APACyear {2020}}%
}]{%
hernan2020causal}
\APACinsertmetastar {%
hernan2020causal}%
\begin{APACrefauthors}%
Hern{\'a}n, M.A.%
\BCBT {}\ \BBA {} Robins, J.M.%
\end{APACrefauthors}%
\unskip\
\newblock
\APACrefYearMonthDay{2020}{}{}.
\newblock
\APACrefbtitle {Causal inference: what if.} {Causal inference: what if.}
\newblock
\APACaddressPublisher{}{Chapman \& Hall/CRC}.
\PrintBackRefs{\CurrentBib}

\bibitem [\protect \citeauthoryear {%
Hoiles%
\ \BBA {} Schaar%
}{%
Hoiles%
\ \BBA {} Schaar%
}{%
{\protect \APACyear {2016}}%
}]{%
hoiles2016bounded}
\APACinsertmetastar {%
hoiles2016bounded}%
\begin{APACrefauthors}%
Hoiles, W.%
\BCBT {}\ \BBA {} Schaar, M.%
\end{APACrefauthors}%
\unskip\
\newblock
\APACrefYearMonthDay{2016}{}{}.
\newblock
{\BBOQ}\APACrefatitle {Bounded off-policy evaluation with missing \dots}
  {Bounded off-policy evaluation with missing \dots}.{\BBCQ}
\newblock
 \APACrefbtitle {International Conference on Machine Learning.} {International
  conference on machine learning.}
\PrintBackRefs{\CurrentBib}

\bibitem [\protect \citeauthoryear {%
Ilse%
, Tomczak%
\BCBL {}\ \BBA {} Forr{\'e}%
}{%
Ilse%
\ \protect \BOthers {.}}{%
{\protect \APACyear {2021}}%
}]{%
ilse2021selecting}
\APACinsertmetastar {%
ilse2021selecting}%
\begin{APACrefauthors}%
Ilse, M.%
, Tomczak, J.M.%
\BCBL {} Forr{\'e}, P.%
\end{APACrefauthors}%
\unskip\
\newblock
\APACrefYearMonthDay{2021}{}{}.
\newblock
{\BBOQ}\APACrefatitle {Selecting data augmentation for simulating
  interventions} {Selecting data augmentation for simulating
  interventions}.{\BBCQ}
\newblock
 \APACrefbtitle {International Conference on Machine Learning.} {International
  conference on machine learning.}
\PrintBackRefs{\CurrentBib}

\bibitem [\protect \citeauthoryear {%
Loulou%
, Remme%
, Kanudia%
, Lehtila%
\BCBL {}\ \BBA {} Goldstein%
}{%
Loulou%
\ \protect \BOthers {.}}{%
{\protect \APACyear {2005}}%
}]{%
loulou2005documentation}
\APACinsertmetastar {%
loulou2005documentation}%
\begin{APACrefauthors}%
Loulou, R.%
, Remme, U.%
, Kanudia, A.%
, Lehtila, A.%
\BCBL {} Goldstein, G.%
\end{APACrefauthors}%
\unskip\
\newblock
\APACrefYearMonthDay{2005}{}{}.
\newblock
{\BBOQ}\APACrefatitle {Documentation for the times model part ii}
  {Documentation for the times model part ii}.{\BBCQ}
\newblock
 \APACrefbtitle {Energy Technology Systems Analysis Programme.} {Energy
  technology systems analysis programme.}
\PrintBackRefs{\CurrentBib}

\bibitem [\protect \citeauthoryear {%
Mandi%
\ \BBA {} Guns%
}{%
Mandi%
\ \BBA {} Guns%
}{%
{\protect \APACyear {2020}}%
}]{%
mandi2020interior}
\APACinsertmetastar {%
mandi2020interior}%
\begin{APACrefauthors}%
Mandi, J.%
\BCBT {}\ \BBA {} Guns, T.%
\end{APACrefauthors}%
\unskip\
\newblock
\APACrefYearMonthDay{2020}{}{}.
\newblock
{\BBOQ}\APACrefatitle {Interior Point Solving for LP-based prediction+
  optimisation} {Interior point solving for lp-based prediction+
  optimisation}.{\BBCQ}
\newblock
 \APACrefbtitle {Neural Information Processing Systems.} {Neural information
  processing systems.}
\PrintBackRefs{\CurrentBib}

\bibitem [\protect \citeauthoryear {%
Papandreou%
\ \BBA {} Yuille%
}{%
Papandreou%
\ \BBA {} Yuille%
}{%
{\protect \APACyear {2011}}%
}]{%
papandreou2011perturb}
\APACinsertmetastar {%
papandreou2011perturb}%
\begin{APACrefauthors}%
Papandreou, G.%
\BCBT {}\ \BBA {} Yuille, A.L.%
\end{APACrefauthors}%
\unskip\
\newblock
\APACrefYearMonthDay{2011}{}{}.
\newblock
{\BBOQ}\APACrefatitle {Perturb-and-map random fields \dots} {Perturb-and-map
  random fields \dots}.{\BBCQ}
\newblock
 \APACrefbtitle {2011 Internationalal Conference on Computer Vision} {2011
  internationalal conference on computer vision}\ (\BPGS\ 193--200).
\PrintBackRefs{\CurrentBib}

\bibitem [\protect \citeauthoryear {%
Paulus%
, Rol{\'\i}nek%
, Musil%
, Amos%
\BCBL {}\ \BBA {} Martius%
}{%
Paulus%
\ \protect \BOthers {.}}{%
{\protect \APACyear {2021}}%
}]{%
paulus2021comboptnet}
\APACinsertmetastar {%
paulus2021comboptnet}%
\begin{APACrefauthors}%
Paulus, A.%
, Rol{\'\i}nek, M.%
, Musil, V.%
, Amos, B.%
\BCBL {} Martius, G.%
\end{APACrefauthors}%
\unskip\
\newblock
\APACrefYearMonthDay{2021}{}{}.
\newblock
{\BBOQ}\APACrefatitle {CombOptNet: Fit the Right NP-Hard Problem by \dots}
  {Comboptnet: Fit the right np-hard problem by \dots}.{\BBCQ}
\newblock
 \APACrefbtitle {arXiv preprint arXiv:2105.02343.} {arxiv preprint
  arxiv:2105.02343.}
\PrintBackRefs{\CurrentBib}

\bibitem [\protect \citeauthoryear {%
Pearl%
}{%
Pearl%
}{%
{\protect \APACyear {2009}}%
}]{%
pearl2009causality}
\APACinsertmetastar {%
pearl2009causality}%
\begin{APACrefauthors}%
Pearl, J.%
\end{APACrefauthors}%
\unskip\
\newblock
\APACrefYear{2009}.
\newblock
\APACrefbtitle {Causality} {Causality}.
\newblock
\APACaddressPublisher{}{Cambridge university press}.
\PrintBackRefs{\CurrentBib}

\bibitem [\protect \citeauthoryear {%
Penn%
\ \BBA {} Povinelli%
}{%
Penn%
\ \BBA {} Povinelli%
}{%
{\protect \APACyear {2007}}%
}]{%
penn2007causal}
\APACinsertmetastar {%
penn2007causal}%
\begin{APACrefauthors}%
Penn, D.C.%
\BCBT {}\ \BBA {} Povinelli, D.J.%
\end{APACrefauthors}%
\unskip\
\newblock
\APACrefYearMonthDay{2007}{}{}.
\newblock
{\BBOQ}\APACrefatitle {Causal cognition in human and nonhuman animals: A
  comparative, critical review} {Causal cognition in human and nonhuman
  animals: A comparative, critical review}.{\BBCQ}
\newblock
 \APACrefbtitle {Annu. Rev. Psychol.} {Annu. rev. psychol.}
\PrintBackRefs{\CurrentBib}

\bibitem [\protect \citeauthoryear {%
Peters%
, Janzing%
\BCBL {}\ \BBA {} Sch{\"o}lkopf%
}{%
Peters%
\ \protect \BOthers {.}}{%
{\protect \APACyear {2017}}%
}]{%
peters2017elements}
\APACinsertmetastar {%
peters2017elements}%
\begin{APACrefauthors}%
Peters, J.%
, Janzing, D.%
\BCBL {} Sch{\"o}lkopf, B.%
\end{APACrefauthors}%
\unskip\
\newblock
\APACrefYear{2017}.
\newblock
\APACrefbtitle {Elements of causal inference} {Elements of causal inference}.
\newblock
\APACaddressPublisher{}{MIT Press}.
\PrintBackRefs{\CurrentBib}

\bibitem [\protect \citeauthoryear {%
Schaber%
, Steinke%
\BCBL {}\ \BBA {} Hamacher%
}{%
Schaber%
\ \protect \BOthers {.}}{%
{\protect \APACyear {2012}}%
}]{%
schaber2012transmission}
\APACinsertmetastar {%
schaber2012transmission}%
\begin{APACrefauthors}%
Schaber, K.%
, Steinke, F.%
\BCBL {} Hamacher, T.%
\end{APACrefauthors}%
\unskip\
\newblock
\APACrefYearMonthDay{2012}{}{}.
\newblock
{\BBOQ}\APACrefatitle {Transmission grid extensions for the integration of
  \dots} {Transmission grid extensions for the integration of \dots}.{\BBCQ}
\newblock
 \APACrefbtitle {Energy Policy.} {Energy policy.}
\PrintBackRefs{\CurrentBib}

\bibitem [\protect \citeauthoryear {%
Spirtes%
}{%
Spirtes%
}{%
{\protect \APACyear {2010}}%
}]{%
spirtes2010introduction}
\APACinsertmetastar {%
spirtes2010introduction}%
\begin{APACrefauthors}%
Spirtes, P.%
\end{APACrefauthors}%
\unskip\
\newblock
\APACrefYearMonthDay{2010}{}{}.
\newblock
{\BBOQ}\APACrefatitle {Introduction to causal inference.} {Introduction to
  causal inference.}{\BBCQ}
\newblock
 \APACrefbtitle {Journal of ML Research.} {Journal of ml research.}
\PrintBackRefs{\CurrentBib}

\bibitem [\protect \citeauthoryear {%
Szegedy%
, Zaremba%
, Sutskever%
, Bruna%
\BCBL {}\ \BBA {} Erhan%
}{%
Szegedy%
\ \protect \BOthers {.}}{%
{\protect \APACyear {2014}}%
}]{%
szegedy2013intriguing}
\APACinsertmetastar {%
szegedy2013intriguing}%
\begin{APACrefauthors}%
Szegedy, C.%
, Zaremba, W.%
, Sutskever, I.%
, Bruna, J.%
\BCBL {} Erhan, D.d.%
\end{APACrefauthors}%
\unskip\
\newblock
\APACrefYearMonthDay{2014}{}{}.
\newblock
{\BBOQ}\APACrefatitle {Intriguing properties of neural networks} {Intriguing
  properties of neural networks}.{\BBCQ}
\newblock
 \APACrefbtitle {International Conference on Learning Representations.}
  {International conference on learning representations.}
\PrintBackRefs{\CurrentBib}

\bibitem [\protect \citeauthoryear {%
Tjeng%
, Xiao%
\BCBL {}\ \BBA {} Tedrake%
}{%
Tjeng%
\ \protect \BOthers {.}}{%
{\protect \APACyear {2019}}%
}]{%
tjeng2017evaluating}
\APACinsertmetastar {%
tjeng2017evaluating}%
\begin{APACrefauthors}%
Tjeng, V.%
, Xiao, K.%
\BCBL {} Tedrake, R.%
\end{APACrefauthors}%
\unskip\
\newblock
\APACrefYearMonthDay{2019}{}{}.
\newblock
{\BBOQ}\APACrefatitle {Evaluating robustness of neural networks with mixed
  integer programming} {Evaluating robustness of neural networks with mixed
  integer programming}.{\BBCQ}
\newblock
 \APACrefbtitle {International Conference on Learning Representations.}
  {International conference on learning representations.}
\PrintBackRefs{\CurrentBib}

\bibitem [\protect \citeauthoryear {%
Wu%
, Wang%
\BCBL {}\ \BBA {} Yu%
}{%
Wu%
\ \protect \BOthers {.}}{%
{\protect \APACyear {2020}}%
}]{%
wu2020stronger}
\APACinsertmetastar {%
wu2020stronger}%
\begin{APACrefauthors}%
Wu, K.%
, Wang, A.%
\BCBL {} Yu, Y.%
\end{APACrefauthors}%
\unskip\
\newblock
\APACrefYearMonthDay{2020}{}{}.
\newblock
{\BBOQ}\APACrefatitle {Stronger and faster Wasserstein adversarial attacks}
  {Stronger and faster wasserstein adversarial attacks}.{\BBCQ}
\newblock
 \APACrefbtitle {International Conference on Machine Learning.} {International
  conference on machine learning.}
\PrintBackRefs{\CurrentBib}

\bibitem [\protect \citeauthoryear {%
Zhou%
, Luo%
, Lin%
, Xu%
\BCBL {}\ \BBA {} Zhang%
}{%
Zhou%
\ \protect \BOthers {.}}{%
{\protect \APACyear {2020}}%
}]{%
zhou2020generating}
\APACinsertmetastar {%
zhou2020generating}%
\begin{APACrefauthors}%
Zhou, N.%
, Luo, W.%
, Lin, X.%
, Xu, P.%
\BCBL {} Zhang, Z.%
\end{APACrefauthors}%
\unskip\
\newblock
\APACrefYearMonthDay{2020}{}{}.
\newblock
{\BBOQ}\APACrefatitle {Generating Multi-label Adversarial Examples by Linear
  Programming} {Generating multi-label adversarial examples by linear
  programming}.{\BBCQ}
\newblock
 \APACrefbtitle {International Joint Conf. on Neural Networks.} {International
  joint conf. on neural networks.}
\PrintBackRefs{\CurrentBib}

\bibitem [\protect \citeauthoryear {%
Z{\"u}gner%
, Akbarnejad%
\BCBL {}\ \BBA {} G{\"u}nnemann%
}{%
Z{\"u}gner%
\ \protect \BOthers {.}}{%
{\protect \APACyear {2018}}%
}]{%
zugner2018adversarial}
\APACinsertmetastar {%
zugner2018adversarial}%
\begin{APACrefauthors}%
Z{\"u}gner, D.%
, Akbarnejad, A.%
\BCBL {} G{\"u}nnemann, S.%
\end{APACrefauthors}%
\unskip\
\newblock
\APACrefYearMonthDay{2018}{}{}.
\newblock
{\BBOQ}\APACrefatitle {Adversarial attacks on neural networks for graph data}
  {Adversarial attacks on neural networks for graph data}.{\BBCQ}
\newblock
 \APACrefbtitle {Knowledge Discovery and Data Mining.} {Knowledge discovery and
  data mining.}
\PrintBackRefs{\CurrentBib}

\end{thebibliography}
\end{document}